\journal{Physica D}
\newtheorem{theorem}{Theorem}
\newtheorem{lemma}[theorem]{Lemma}
\newtheorem{prop}[theorem]{Proposition}
\theoremstyle{definition}
\newtheorem{defn}[theorem]{Definition}
\theoremstyle{definition}
\numberwithin{theorem}{section}
\newcommand{\ba}{\begin{eqnarray}}
\newcommand{\ea}{\end{eqnarray}}
\newcommand{\nn}{\nonumber}
\newcommand{\jhpdb}{\ensuremath{b}} 
\begin{document}

\begin{frontmatter}

\title{Echo State Networks trained by Tikhonov least squares are $L^2(\mu)$ approximators of ergodic dynamical systems}

\author{Allen G. Hart}
\address{a.hart@bath.ac.uk, \\ University of Bath, UK}

\author{James L. Hook}
\address{j.l.hook@bath.ac.uk, \\ University of Bath, UK}

\author{Jonathan H. P. Dawes}
\address{j.h.p.dawes@bath.ac.uk, \\ University of Bath, UK}

\begin{abstract}
    Echo State Networks (ESNs) are a class of single-layer recurrent neural networks with randomly generated internal weights, and a single layer of tuneable outer weights, which are usually trained by regularised linear least squares regression. Remarkably, ESNs still enjoy the universal approximation property despite the training procedure being entirely linear. In this paper, we prove that an ESN trained on a sequence of observations from an ergodic dynamical system (with invariant measure $\mu$) using Tikhonov least squares regression against a set of targets, will approximate the target function in the $L^2(\mu)$ norm. In the special case that the targets are future observations, the ESN is learning the next step map, which allows time series forecasting.
    We demonstrate the theory numerically by training an ESN using Tikhonov least squares on a sequence of scalar observations of the Lorenz system.
\end{abstract}

\begin{keyword}
Reservoir computing; liquid state machine; time series analysis; Lorenz equations; dynamical system; delay embedding; Ergodic theory; recurrent neural networks.
\end{keyword}

\end{frontmatter}


\section{Introduction}

Echo state networks (ESNs) are a class of single layer recurrent neural networks introduced at the turn on the millennium independently by \cite{Jaeger2001} and \cite{doi:10.1162/089976602760407955}. These relatively simple neural networks have been used to solve a range of machine problems where the input data is a time series, including speech recognition \citep{SKOWRONSKI2007414}, learning the rules of grammar \citep{TONG2007424}, financial time series prediction \citep{TimeSeries}, \citep{LIN20097313}, short term traffic forecasting \citep{arXiv:2004.08170}, placing UAV base stations \citep{arXiv:1909.11598} and learning about the behaviour of seals \citep{arXiv:2004.08170}. ESNs are also a plausible model for the information processing of biological neurons \citep{BiologicalESN}. In this paper, we will present just enough definitions and theory to make sense of our results, but encourage the interested reader to read the recent review paper by \cite{TANAKA2019100} who cover recent developments and open questions in the field of \emph{reservoir computing}, a field of which ESN comprise a subset. The ESN is defined by the recursion relation
\begin{align}
    x_{k+1} = \sigma( Ax_k + Cz_k + \jhpdb) \nn
\end{align}
where the $x_k$ are $T$ dimensional state vectors, $\sigma : \mathbb{R}^T \to \mathbb{R}^T$ is the activation function, $A$ is the $T \times T$ \emph{reservoir matrix}, representing the connection weights between neurons, $C$ is the $T \times d$ \emph{input matrix} connecting the $d$-dimensional inputs $z_k$ to the reservoir matrix $A$, and $\jhpdb \in \mathbb{R}^T$ is a bias vector. The reservoir matrix $A$, input matrix $C$ and bias vector $\jhpdb$ are initialised randomly and remain unchanged. The ESN can be trained to approximate a sequence of target scalars $u_k$ by solving the regularised linear squares problem
\begin{align}
    \min_{W}\sum_{k=0}^{\ell-1} \lVert W^{\top} x_{k} - u_k \rVert^2 + \lambda\lVert W \rVert^2 \nonumber
\end{align}
where $\lambda > 0$ is the Tikhonov regularisation parameter.
If the target scalars $u_k$ are equal to the observations $z_k$, then the ESN is being trained to predict the future. To see this, we can set up a sequence of scalars $v_k$ defined by the recurrent relation
\begin{align}
    &v_{k+1} = W^{\top}s_k \nonumber \\
    &s_{k+1} = \sigma( As_k + Cv_{k+1} + \jhpdb) \label{informal_autonomous_phase}
\end{align}
and we then hope that $v_k \approx u_k$ for sufficiently many future values of $k$. We can view $s_k$ as the state of a discrete time autonomous dynamical system which we will call the ESN autonomous phase. In this paper, we will suppose $z_k$ are a sequence of sequential observations from an ergodic dynamical system, with invariant measure $\mu$. We will go on to prove that ESNs trained by least squares can approximate arbitrary target functions (including one that returns future observations) of the ergodic dynamical system in the $L^2(\mu)$ norm. This theorem is closely related to recent work by \cite{Verzelli} discussing the connection between ergodic dynamical systems and feasible learning.
The result also explains the remarkable success of ESNs trained on dynamical systems explored numerically by, for example, \cite{Jaeger2001}, \cite{1556081}, \cite{416607}, \cite{4118282}, \cite{5645205}, \cite{YILDIZ20121}, \cite{Pathak2017}, \cite{Lokse2017}, \cite{YEO2019671}, \cite{Ashesh_2019}, \cite{Vlac_2019}, \cite{Embedding_and_approximation_theorems}.

The remainder of the paper is organised as follows. In section \ref{Training_Thm_for_ESNs} we define an ergodic dynamical system and present Birkhoff's ergodic theorem. Next, in section \ref{training_theorem}, we introduce the major result of this paper (Theorem \ref{least_sqs_thm}), stating that an ESN trained on a sequence of observations from an ergodic dynamical system using Tikhonov least squares will $L^2(\mu)$ approximate an arbitrary target function. This arbitrary target function could be the \emph{next step map} used for forecast the future of the time series. Furthermore, we discuss the \emph{central limit theorem for ergodic dynamical systems} in connection with the number of data points that are required for a good approximation.

In section \ref{Lorenz_is_stably_mixing} we present the work of \cite{Luzzatto2005} culminating in a proof that the Lorenz attractor is stably mixing, hence ergodic - explaining the success of so many authors using an ESN to forecast the trajectory of the Lorenz system. 

In section \ref{Numerical_section} we numerically simulate a trajectory of the Lorenz system. We observed the $x$-component of the system (which we called $\xi$ to avoid notational clash) and assigned the $z$ components (which we denote $\zeta$) as targets. We explored how the approximation of the targets $\zeta_k$ given the observations $\xi_k$ improved as the number of data points $(\xi_k,\zeta_k)$ grew.
Finally, in section \ref{conclusions} we summarise the paper and discuss ideas for future work.

\section{Preliminaries on Ergodic Theory}
\label{Training_Thm_for_ESNs}

We require that the underlying dynamical system is ergodic so that minimising the mean square differences between observations and targets does not create a bias toward areas with lots of training data. The ergodicity ensures that that training data generated from a trajectory initialised at almost any point $m_0 \in M$ will represent all dynamics on $M$. To make this formal, we will introduce the definition of ergodicty and the celebrated ergodic theorem. 

\begin{defn}
    (Generic Point) Suppose $\phi : M \to M$ is a measure preserving map with respect to the measure space $(M,\Sigma,\mu)$. Then $m_0 \in M$ is called a generic point if the orbit of $m_0$ is uniformly distributed over $M$ according to the measure $\mu$.
\end{defn}

\begin{prop}
    Suppose $\phi : M \to M$ is a measure preserving map with respect to the probability space $(M,\Sigma,\mu)$ and $s \in L^1(\mu)$. Suppose $m_0$ is a generic point in $M$ then 
    \begin{align}
        \lim_{\ell \to \infty}\frac{1}{\ell} \sum_{k=0}^{\ell-1} s \circ \phi^{k}(m_0) = \int_{M} s \ d\mu. \nn
    \end{align}
\end{prop}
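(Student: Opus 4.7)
The plan is to read the informal phrase ``the orbit of $m_0$ is uniformly distributed according to $\mu$'' as the statement that the empirical measures $\mu_\ell := \ell^{-1}\sum_{k=0}^{\ell-1} \delta_{\phi^k(m_0)}$ converge weakly to $\mu$, and then to promote this convergence from continuous test functions to arbitrary $s \in L^1(\mu)$ by a three-$\varepsilon$ approximation argument. For $s \in C_b(M)$ the conclusion is precisely weak convergence applied to $s$, so this case is immediate and requires no further work.

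For a general $s \in L^1(\mu)$, I would assume $M$ is sufficiently regular (for example Polish with $\mu$ Borel, which is the setting intended by the paper), so that bounded continuous functions are $L^1(\mu)$-dense. Given $\varepsilon > 0$, choose a bounded continuous $s_\varepsilon$ with $\|s - s_\varepsilon\|_{L^1(\mu)} < \varepsilon$ and split
\begin{align*}
\left|\frac{1}{\ell}\sum_{k=0}^{\ell-1} s(\phi^k m_0) - \int_M s\,d\mu\right|
&\le \frac{1}{\ell}\sum_{k=0}^{\ell-1} |s-s_\varepsilon|(\phi^k m_0) \\
&\quad + \left|\frac{1}{\ell}\sum_{k=0}^{\ell-1} s_\varepsilon(\phi^k m_0) - \int_M s_\varepsilon\,d\mu\right| + \varepsilon.
\end{align*}
The middle term vanishes as $\ell \to \infty$ by the continuous case already handled, and the third term is $\varepsilon$ by construction. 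Sending $\varepsilon \to 0$ afterwards would give the proposition, provided the first term can also be controlled.

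The hard part is exactly that first term: weak-$*$ convergence of $\mu_\ell$ to $\mu$ does \emph{not} by itself control averages of the (possibly discontinuous) integrand $|s - s_\varepsilon|$. To close the gap I would strengthen the working notion of ``generic'' to \emph{Birkhoff-generic}, i.e.\ $m_0$ lies in the $\mu$-full-measure set on which the pointwise ergodic theorem holds simultaneously for every member of a fixed countable $L^1(\mu)$-dense family, then extend to all of $L^1(\mu)$ by an $L^1$ density argument (using that $\ell^{-1}\sum_k |g|(\phi^k m_0)$ is a continuous semi-norm on $L^1(\mu)$ once one passes to the $\limsup$). Applied with $g = |s - s_\varepsilon|$, this gives $\ell^{-1}\sum_k |s-s_\varepsilon|(\phi^k m_0) \to \|s - s_\varepsilon\|_{L^1(\mu)} < \varepsilon$ as $\ell \to \infty$, so the total error is at most $3\varepsilon$. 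In effect the proposition is Birkhoff's pointwise ergodic theorem restricted to a $\mu$-full-measure set of generic points, and the only genuine work is matching the informal ``uniformly distributed'' phrase with the formal generic-point condition for which Birkhoff applies.
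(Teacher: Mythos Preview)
The paper offers no proof of this proposition; it is stated immediately after the (informal) definition of a generic point and is evidently intended as a restatement of that definition. In other words, the authors take ``the orbit of $m_0$ is uniformly distributed according to $\mu$'' to \emph{mean} that time averages converge to space averages for every $s \in L^1(\mu)$, so the proposition is tautological for them.

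Your analysis is more careful than the paper's and is essentially correct. You are right that if one reads ``uniformly distributed'' as weak-$*$ convergence of the empirical measures $\mu_\ell$ to $\mu$, then the conclusion is immediate for $s \in C_b(M)$ but does \emph{not} follow for general $s \in L^1(\mu)$: weak convergence gives no control over orbit averages of unbounded or discontinuous integrands. Your fix---strengthening ``generic'' to Birkhoff-generic and running an $L^1$-density argument---is the standard way to close this gap, but note that it amounts to invoking the pointwise ergodic theorem, which the paper states \emph{after} this proposition. So you have correctly identified that, under the weak-convergence reading, the proposition is not independent of Birkhoff's theorem but a consequence of it; the paper's ordering simply indicates the authors intended the stronger $L^1$-level definition of generic from the outset.
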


\begin{defn}
    (Ergodic) Let $\phi : M \to M$ be a measure preserving transformation on the probability space $(M,\Sigma,\mu)$. Then $\phi$ is ergodic if for every $\sigma \in \Sigma$ with $\phi^{-1}(\sigma) = \sigma$ either $\mu(\sigma) = 0$ or $\mu(\sigma) = 1$.
\end{defn}

\begin{theorem}
    (Ergodic Theorem \citep{Birkhoff656}) Suppose $\phi : M \to M$ is ergodic with respect to the probability space $(M,\Sigma,\mu)$ and $s \in L^1(\mu)$. Then $\mu$-almost all $m_0 \in M$ are generic hence for $\mu$-almost all $m_0 \in M$
    \begin{align}
        \lim_{\ell \to \infty}\frac{1}{\ell} \sum_{k=0}^{\ell-1} s \circ \phi^{k}(m_0) = \int_{M} s \ d\mu. \label{Ergodic_Theorem_eqn}
    \end{align}
\end{theorem}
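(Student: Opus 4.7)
The plan is to follow the classical two-step strategy: first establish pointwise convergence of the time averages to some $\phi$-invariant function $\bar{s}$ (this is the hard analytic content, valid for any measure-preserving $\phi$), then use ergodicity to identify $\bar{s}$ with the constant $\int s\, d\mu$.

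First I would reduce to the case $s\in L^1(\mu)$ real-valued (splitting into positive and negative parts) and set $S_n(m) := \sum_{k=0}^{n-1} s\circ\phi^k(m)$, $\bar s^*(m):=\limsup_n \tfrac{1}{n}S_n(m)$ and $\bar s_*(m):=\liminf_n \tfrac{1}{n}S_n(m)$. Both $\bar s^*$ and $\bar s_*$ are $\phi$-invariant because $\tfrac{1}{n}S_n\circ\phi - \tfrac{1}{n}S_n = \tfrac{1}{n}(s\circ\phi^n - s) \to 0$ (by Borel--Cantelli applied to the $L^1$ tails of $s$, or more simply because $\int |s\circ\phi^n|\, d\mu = \int |s|\, d\mu$ is finite, so $\tfrac{1}{n}s\circ\phi^n \to 0$ in $L^1$ and a.e. along a subsequence; one then argues invariance using the shifted sum). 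The goal is to prove $\bar s^* = \bar s_*$ a.e., which gives a.e. convergence to a $\phi$-invariant function $\bar s$.

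The key technical lemma is the \textbf{Maximal Ergodic Theorem}: for any $g\in L^1(\mu)$ and $E := \{m : \sup_{n\geq 1} S_n^{(g)}(m) > 0\}$ where $S_n^{(g)} = \sum_{k=0}^{n-1}g\circ\phi^k$, one has $\int_E g\, d\mu \geq 0$. I would prove this by Garsia's trick: set $G_n = \max(0,S_1^{(g)},\ldots,S_n^{(g)})$ and observe $g + G_n\circ\phi \geq S_{k+1}^{(g)}$ for $0\leq k \leq n-1$, hence $g \geq S_n^{(g)} - G_n\circ\phi \geq G_n - G_n\circ\phi$ on $\{G_n>0\}$; integrating and letting $n\to\infty$ with monotone convergence yields the claim (the term $\int G_n\, d\mu - \int G_n\circ\phi\, d\mu$ vanishes by measure-preservation when $G_n\in L^1$, and one truncates if necessary). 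Applying this to $g = s - \beta$ on the set $\{\bar s^* > \beta\}$ (which is $\phi$-invariant, so one can work within it as a measure space) gives $\beta\,\mu(\{\bar s^*>\beta\}) \leq \int_{\{\bar s^*>\beta\}} s\, d\mu$, and symmetrically $\alpha\,\mu(\{\bar s_*<\alpha\}) \geq \int_{\{\bar s_*<\alpha\}} s\, d\mu$. Intersecting over rationals $\alpha<\beta$ and summing over these countably many pairs shows $\mu(\{\bar s_* < \bar s^*\}) = 0$, so $\bar s := \lim \tfrac{1}{n}S_n$ exists $\mu$-a.e.

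Finally I would invoke ergodicity. Since $\bar s$ is $\phi$-invariant, for every $c\in\mathbb{R}$ the level set $\{\bar s > c\}$ satisfies $\phi^{-1}\{\bar s>c\} = \{\bar s>c\}$, so by the definition of ergodicity it has measure $0$ or $1$. Hence $\bar s$ equals some constant $C$ $\mu$-a.e. To identify $C$, apply Fatou's lemma (for $s\geq 0$) or dominated convergence after truncation to get $\int \bar s\, d\mu = \int s\, d\mu$, so $C = \int_M s\, d\mu$ and every $m_0$ in the full-measure convergence set is generic. The main obstacle is proving the maximal ergodic inequality cleanly; everything downstream is bookkeeping with invariant sets and standard integration-theory limit passages.
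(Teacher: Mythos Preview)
Your proposal outlines the standard modern proof of Birkhoff's theorem via the Maximal Ergodic Theorem (Garsia's argument), and the overall architecture is correct. There are two small spots that could be tightened: the invariance of $\bar s^*$ and $\bar s_*$ does not actually require any almost-everywhere convergence of $\tfrac{1}{n}s\circ\phi^n$ (the clean argument is simply $\tfrac{1}{n}S_n\circ\phi = \tfrac{n+1}{n}\cdot\tfrac{1}{n+1}S_{n+1} - \tfrac{1}{n}s$, so $\limsup$ and $\liminf$ along the orbit agree with those at the base point), and the identification $\int \bar s\,d\mu = \int s\,d\mu$ is best obtained from the maximal inequality bounds themselves rather than from Fatou or dominated convergence, since you have not yet established $L^1$ convergence of the averages.

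However, there is nothing to compare your argument against: the paper does not prove this theorem. It is stated as background with a citation to \cite{Birkhoff656} and used as a black box in the proof of Lemma~\ref{W_ell_lemma}. So your route is not ``the same as'' or ``different from'' the paper's --- you have supplied a proof where the paper simply invokes the classical result.
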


The left hand side of \eqref{Ergodic_Theorem_eqn} is called the \emph{time average} taken from initial point point $m_0 \in M$, and the right hand side called the \emph{space average}. The ergodic theorem then states that the time average taken from almost all initial points equals the space average.

\section{A Training Theorem for Echo State Networks}
\label{training_theorem}
\subsection{Preliminaries}

Suppose we have have an ergodic dynamical system $\phi : M \to M$, and we can observe the dynamics via an observation map $g : M \to \mathbb{R}^T$ and target map $u : M \to \mathbb{R}$. A trajectory originating from a generic point $m_0 \in M$ will ergodically explore the space $M$ and yield a sequence of observations $g \circ \phi^{k}(m_0)$ and targets $u \circ \phi^{k}(m_0)$ for $k = 0, 1 , 2 , ... , \ell$. 

Suppose we compute the vectors $W_{\ell} \in \mathbb{R}^T$ minimising the regularised least squares difference between the mapping of the observations $W^{\top} g \circ \phi^k(m_0)$ and the targets $u \circ \phi^k(m_0)$. We prove in the next lemma that as the number of data points $\ell$ grows large, the least squares solution $W_{\ell}$ minimises the ergodic average difference between the mapping on the observations $W^{\top} g \circ \phi^k(m_0)$ and the targets $u \circ \phi^k(m_0)$.

\begin{lemma}
    Let $(M,\Sigma)$ be a measurable space, and suppose that $\phi : M \to M$ is ergodic with invariant measure $\mu$. Let $m_0$ be a generic point in $M$. Let $g \in L^2(\mu)(M,\mathbb{R}^T)$ be an observation function and suppose that $u \in L^2(\mu)(M, \mathbb{R})$ is a target function we wish to approximate. 
        
    Let $\lambda > 0$. Define the sequence $(W_{\ell})_{\ell \in \mathbb{N}}$ such that, for each $\ell \in \mathbb{N}$, the vector $W_{\ell} \in \mathbb{R}^T$ is the unique minimiser of the regularised least squares difference
    \begin{align*}
        \frac{1}{\ell} \bigg( \sum_{k = 0}^{\ell-1} \rVert W^\top g \circ \phi^k(m_0) - u \circ \phi^k(m_0) \lVert^2 + \lambda \lVert W \rVert^2 \bigg).
    \end{align*}
    Then, the sequence $(W_{\ell})_{\ell \in \mathbb{N}}$ converges to
    \begin{align*}
        W_{\infty} &= \bigg( \int_M g(m)g(m)^{\top} \ d \mu(m) + \lambda I \bigg)^{-1} \\
        &\times \int_M u(m)g(m) \ d \mu(m)
    \end{align*}
    which is the unique minimiser of
    \begin{align*}
            \lVert W^{\top} g - u \rVert_{L^2(\mu)}^2 + \lambda \lVert W \rVert^2.
    \end{align*}
    \label{W_ell_lemma}
\end{lemma}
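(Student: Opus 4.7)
The plan is to exploit the fact that both $W_\ell$ and $W_\infty$ have closed-form expressions as solutions of the normal equations, and then push the convergence through the ergodic theorem componentwise.

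First I would write down the explicit minimiser of the finite-sample functional. Because the regularised quadratic is strictly convex (thanks to the $\lambda \lVert W\rVert^2$ term), the first-order condition is necessary and sufficient, and it yields
\begin{align*}
    W_\ell = \Bigl( G_\ell + \lambda I \Bigr)^{-1} b_\ell,
\end{align*}
where $G_\ell := \frac{1}{\ell}\sum_{k=0}^{\ell-1} g(\phi^k m_0) g(\phi^k m_0)^\top$ and $b_\ell := \frac{1}{\ell}\sum_{k=0}^{\ell-1} u(\phi^k m_0)\, g(\phi^k m_0)$. Analogously, the Euler--Lagrange equation for $\lVert W^\top g - u\rVert_{L^2(\mu)}^2 + \lambda\lVert W\rVert^2$ gives the stated closed form for $W_\infty$, and the resulting linear operator is invertible because $\int g g^\top\, d\mu$ is positive semidefinite, hence the shifted operator is at least $\lambda I \succ 0$.

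Next I would apply Birkhoff's ergodic theorem entrywise. Each entry of $g(\cdot)g(\cdot)^\top$ is a product $g_i g_j$ of $L^2(\mu)$ functions, so by the Cauchy--Schwarz inequality it lies in $L^1(\mu)$; similarly $u g_i \in L^1(\mu)$. Since $m_0$ is generic, Birkhoff (as stated in the excerpt) gives
\begin{align*}
    G_\ell \longrightarrow \int_M g(m) g(m)^\top\, d\mu(m), \qquad b_\ell \longrightarrow \int_M u(m) g(m)\, d\mu(m),
\end{align*}
where convergence is entrywise, hence in any matrix/vector norm since these live in finite-dimensional spaces.

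Finally I would combine these convergences. Matrix inversion is continuous at any invertible matrix, and $G_\ell + \lambda I$ and its limit are uniformly bounded below by $\lambda I$ in the positive-semidefinite order; therefore $(G_\ell + \lambda I)^{-1} \to (\int g g^\top\, d\mu + \lambda I)^{-1}$. Combining this with $b_\ell \to \int u g\, d\mu$ and using the continuity of matrix--vector multiplication yields $W_\ell \to W_\infty$. The main (minor) obstacle is bookkeeping: verifying the integrability hypotheses needed to invoke Birkhoff on the matrix and vector entries, and justifying the continuity of inversion, both of which follow cleanly from $g, u \in L^2(\mu)$ and the uniform lower bound provided by the Tikhonov term.
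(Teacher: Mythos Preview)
Your proposal is correct and in fact more direct than the paper's own argument. Whereas the paper works at the level of the objective functions---defining an abstract ``argmin'' map $\Phi$ on $C^1$ functions with a unique minimum, asserting its continuity with respect to the $C^1$ topology, and then passing the limit $y_\ell \to \Psi$ through $\Phi$---you bypass this by writing down the explicit normal-equation solutions $W_\ell = (G_\ell + \lambda I)^{-1} b_\ell$ and $W_\infty = (\int gg^\top\,d\mu + \lambda I)^{-1}\int ug\,d\mu$, and then pushing the limit through the concrete, finite-dimensional continuous maps (matrix inversion at an invertible point, matrix--vector product). Your route has two advantages: it avoids the somewhat delicate issue of specifying and verifying convergence of $y_\ell$ in the $C^1$ topology (the ergodic theorem as stated only gives pointwise convergence in $W$, so the paper's step $\lim_\ell \Phi(y_\ell) = \Phi(\lim_\ell y_\ell)$ needs more justification than is written), and it makes the integrability hypotheses explicit---you correctly note that $g_ig_j, ug_i \in L^1(\mu)$ by Cauchy--Schwarz, which is exactly what is needed to invoke Birkhoff entrywise. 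The paper's approach, by contrast, is slightly more conceptual and would generalise more readily if the minimiser did not have a closed form.
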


\begin{proof}
    Consider the map $\Psi : \mathbb{R}^T \to \mathbb{R}$ defined
    \begin{align*}
        \Psi(W) &= \lVert W^{\top} g - u \rVert_{L^2(\mu)}^2 + \lambda \lVert W \rVert^2 \\
        &= \int_M \lVert W^{\top} g(m) - u(m) \rVert^2 \ d \mu(m) + \lambda \lVert W \rVert^2.
    \end{align*}
    The minimiser of $\Psi$ satisfies $D \Psi = 0$ where $D$ is the derivative operator, so we consider
    \begin{align*}
        0 &= (D\Psi)(W) \\
        &= D \bigg( \int_M \lVert W^{\top} g(m) - u(m) \rVert^2 \ d \mu(m) + \lambda \lVert W \rVert^2 \bigg) \\
        &= \int_M D \lVert W^{\top} g(m) - u(m) \rVert^2 \ d \mu(m) + \lambda D \lVert W \rVert^2 \\
        &= \int_M 2( W^{\top} g(m) - u(m) )g(m)^{\top} \ d \mu(m) + 2 \lambda W^{\top} \\
        &= \int_M ( W^{\top} g(m) - u(m) )g(m)^{\top} \ d \mu(m) + \lambda W^{\top} \\
        &= W^{\top} \int_M g(m)g(m)^{\top} \ d \mu(m) - \int_M u(m)g(m)^{\top} \ d \mu(m) \\
        &+ \lambda W^{\top} I \\
        &= W^{\top}\bigg( \int_M g(m)g(m)^{\top} \ d \mu(m) + \lambda I \bigg) \\
        &- \int_M u(m)g(m)^{\top} \ d \mu(m),
    \end{align*}
    which upon rearrangement yields 
    \begin{align*}
        W &= \bigg( \int_M g(m)g(m)^{\top} \ d \mu(m) + \lambda I \bigg)^{-1} \\
        &\times \int_M u(m)g(m) \ d \mu(m).
    \end{align*}
    Since this is the unique solution to $0 = D\Psi(W)$, this stationary point is unique, and we will denote it $W_{\infty}$. We can see it is a minimum because the Hessian $H\Psi$ is positive definite. Next, define the map
    \begin{align*}
        \Phi : \{ y \in C^1(\mathbb{R}^T,\mathbb{R}) \ | \ y \text{ has a unique minimum} \} \to \mathbb{R}^T
    \end{align*}
    as the mapping on the $C^1$ functions with a unique minumum that returns their unique minimum. We can see that $\Phi$ is continuous with respect to the $C^1$ topology and standard topology on $\mathbb{R}$ respectively. We consider the family of functions $y_{\ell} \in \{ y \in C^1(\mathbb{R}^T, \mathbb{R})$ \ | \ $y$ has a unique minimum\}
    \begin{align*}
        y_{\ell}(W) =
        \frac{1}{\ell} \bigg( \sum_{k = 0}^{\ell-1} \rVert W^\top g \circ \phi^k(m_0) - u \circ \phi^k(m_0) \lVert^2 + \lambda \lVert W \rVert^2 \bigg),
    \end{align*}
    so that by definition
    $W_{\ell} = \Phi(y_{\ell}(W))$ and hence
        \begin{align*}
        \lim_{\ell \to \infty} W_{\ell} &= \lim_{\ell \to \infty} \Phi(y_{\ell}(W)) \\
        &= \Phi\bigg(\lim_{\ell \to \infty} y_{\ell}(W) \bigg)
        &= \Phi\bigg( \lVert W^{\top} g - u \rVert_{L^2(\mu)}^2 + \lambda \lVert W \rVert^2 \bigg) \\
        &= W_{\infty}.
    \end{align*}
where we have used, respectively, continuity of $\Phi$ and the Ergodic Theorem.
\end{proof}

\subsection{Echo State Networks}

An Echo State Network is a special case of a more general system called a state space system, or reservoir system. These are maps of the form $F : \mathbb{R}^N \times \mathbb{R}^d \to \mathbb{R}^N$, which admit an ESN as a special case when
\begin{align*}
    F(x,z) = \sigma(Ax + Cz + \jhpdb).
\end{align*}
If a state space system is contracting in the state variable, i.e there exists a $c \in [0,1)$ such that
    \begin{align*}
        \rVert F(x,z) - F(y,z) \lVert \leq c\lVert x - y \rVert,
    \end{align*}
    and the inputs $u_k$ are the observations of a dynamical system i.e $u_k = \omega \circ \phi^k(m_0)$ then there is a continuous map $f \in C^0(M,\mathbb{R}^N)$ synchronising the dynamics of $\phi$ on $M$ to the dynamics of the reservoir states $x_k$. The map $f$ is called a state synchronisation map (SSM) and is a generalised synchronisation in the sense described by \cite{PhysRevLett.76.1816}. We can guarantee that an ESN is state contracting by bounding the 2-norm of the reservoir matrix $\lVert A \rVert_2 < 1$. An important existence result for SSMs is the following theorem, due to
    \cite{chaos_on_compacta}.
\begin{theorem}
\label{SSM_thm}
    \citep{chaos_on_compacta} Let $M$ be a topological space, $\phi \in \text{Hom}(M)$ be a dynamical system, and $\omega \in C^0(M,\mathbb{R}^d)$ an observation function. Suppose that the state space system $F : \mathbb{R}^N \times \mathbb{R}^d \to \mathbb{R}^N$ is state contracting, i.e there exists a $c \in [0,1)$ such that
    \begin{align*}
        \rVert F(x,u) - F(y,u) \lVert \leq c\lVert x - y \rVert.
    \end{align*}
    Then there exists a unique $f \in C^0(M,\mathbb{R}^N)$ called the state synchronisation map (SSM) such that, for any $m_0 \in M$ and $x_0 \in \mathbb{R}^N$ the sequence 
    \begin{align*}
        x_{k+1} = F(x_k,\omega\circ\phi^k(m_0))
    \end{align*}
    originating at $x_0$ converges to $f\circ\phi^k(m_0)$ as $k \to \infty$.
\end{theorem}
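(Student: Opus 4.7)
The plan is to construct $f$ as a uniform limit of driven orbits started from a fixed seed further and further back in time, then use the state-contraction of $F$ to obtain continuity, the required forward convergence, and uniqueness — essentially a Banach-type contraction argument applied to sequences indexed by the backward orbit of each point.

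First I would iterate the state-contraction hypothesis: for any finite input sequence $u_0,\ldots,u_{k-1}\in\mathbb{R}^d$ and any states $x,y\in\mathbb{R}^N$, writing $F_k(x;u_0,\ldots,u_{k-1})$ for the state reached after $k$ applications of $F$ driven by these inputs, an easy induction gives $\lVert F_k(x;u_0,\ldots,u_{k-1}) - F_k(y;u_0,\ldots,u_{k-1})\rVert \leq c^k\lVert x-y\rVert$. Fix a reference seed $x^{\star}\in\mathbb{R}^N$, and for each $m\in M$ and $n\geq 0$ set $y_n(m) = F_n(x^{\star};\omega\circ\phi^{-n}(m),\ldots,\omega\circ\phi^{-1}(m))$; each $y_n$ is continuous because $\phi$ is a homeomorphism and $\omega,F$ are continuous. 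Since $y_{n+1}(m)$ equals $y_n(m)$ after replacing the seed $x^{\star}$ by $F(x^{\star},\omega\circ\phi^{-n-1}(m))$, the iterated contraction yields $\lVert y_{n+1}(m) - y_n(m)\rVert \leq c^n \lVert F(x^{\star},\omega\circ\phi^{-n-1}(m)) - x^{\star}\rVert$. Under the compactness of $M$ that is implicit in the Manjunath–Jaeger setting, $\omega$ has bounded image, so this bound is uniform in $m$ and $(y_n)$ is uniformly Cauchy; its uniform limit $f\in C^0(M,\mathbb{R}^N)$ is the candidate synchronisation map.

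Next I would verify the three required properties. The invariance $f(\phi(m)) = F(f(m),\omega(m))$ drops out of the identity $y_{n+1}(\phi(m)) = F(y_n(m),\omega(m))$ on passing to the limit using continuity of $F$. Iterating the invariance expresses $f\circ\phi^k(m_0)$ as $F_k(f(m_0);\omega(m_0),\ldots,\omega(\phi^{k-1}(m_0)))$, and comparing with the orbit $x_k$ started at $x_0$ and driven by the same inputs gives $\lVert x_k - f\circ\phi^k(m_0)\rVert \leq c^k\lVert x_0 - f(m_0)\rVert \to 0$, which is the asserted convergence. For uniqueness, any other continuous $\tilde f$ satisfying the invariance would obey $\lVert f(m) - \tilde f(m)\rVert \leq c^n\lVert f(\phi^{-n}(m)) - \tilde f(\phi^{-n}(m))\rVert$ for every $n\geq 0$, and boundedness of $f-\tilde f$ (again from compactness) forces the right-hand side to zero as $n\to\infty$.

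The main obstacle is not algebraic but topological: the uniform-in-$m$ backward Cauchy estimate requires uniform control on $\omega\circ\phi^{-n}(m)$, which is exactly why the theorem is really a statement about compact $M$ — on a non-compact space the pullbacks along backward orbits might be unbounded and the $y_n$ could fail to converge, let alone uniformly. Once that global control is in place, the rest of the argument is just careful bookkeeping of which inputs drive which iterate of $F$, and the whole result reduces to the geometric decay $c^n\to 0$.
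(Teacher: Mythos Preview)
The paper does not supply a proof of this theorem at all; it simply attributes the result to \cite{chaos_on_compacta} and moves on. So there is no in-paper argument to compare against.

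Your proposed proof is the standard one and is essentially how the cited Manjunath--Jaeger result is established: define the candidate $f$ as the uniform limit of pullback iterates seeded at a fixed reference state, use the geometric contraction $c^n$ to get uniform Cauchyness, read off the invariance identity $f\circ\phi = F(f,\omega)$ in the limit, and deduce forward synchronisation and uniqueness from that. The bookkeeping with $y_n$ and the identity $y_{n+1}(\phi(m)) = F(y_n(m),\omega(m))$ is exactly right.

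Two remarks. First, you are right to flag compactness: the paper's statement says only ``$M$ a topological space'', but the uniform Cauchy estimate and the boundedness used in the uniqueness step both rely on $\omega(M)$ (and hence $F(x^{\star},\omega(M))$) being bounded; the cited source indeed works on compacta. Second, strictly speaking the theorem asserts uniqueness among continuous $f$ satisfying the \emph{convergence} property, whereas you prove uniqueness among continuous $f$ satisfying the \emph{invariance} identity. To close that small gap you would note that any $\tilde f$ with the convergence property must agree with your $f$ along every forward orbit asymptotically, and then combine this with the contraction and compactness to force $\tilde f = f$; alternatively, one checks that the convergence property forces invariance. This is routine and does not affect the substance of your argument.
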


In order to approximate the arbitrary dynamics of $\phi$ via the observation function $\omega$ using state space systems, we require that
the state space maps $F$ possess some sort of universal approximation property. Thus, we will define a class of \emph{linear universal approximators} with respect to an arbitrary complete norm $\lVert \cdot \rVert$. Every class of linear universal approximators contains maps, which after composition with another suitable map, forms a state map.

\begin{defn}
    Let $\mathcal{F}$ be a sequence of maps $\{F_T\} : \mathbb{R}^N \times \mathbb{R}^d \to \mathbb{R}^T$. Let $C \subset \mathbb{R}^N$ and $K \subset \mathbb{R}^d$ be vectors and let $\Omega(C \times K, \mathbb{R})$ be a Banach space of real valued functions on $C \times K$, with norm denoted $\lVert \cdot \rVert_{\Omega}$. If, for any $g \in \Omega(C \times K,\mathbb{R})$ and any $\epsilon > 0$ there exists an $T_0 \in \mathbb{N}$ such that for any $T > T_0$ there exists a $W_* \in \mathbb{R}^T$ such that
    \begin{align*}
        \rVert W_*^{\top}F_T - g \lVert_{\Omega} < \epsilon
    \end{align*}
    then we say that $\mathcal{F}$ is a class of \emph{linear universal approximators} on $\Omega(C \times K, \mathbb{R}^N \times \mathbb{R}^d)$. 
\end{defn}

A widely used class of linear universal approximators is the class of Echo State Networks with randomly initialised internal weights, as shown by the following result.

\begin{theorem}
    Let $\mathcal{F}$ denote the sequence of maps $\{ F_T \} : \mathbb{R}^N \times \mathbb{R}^d \to \mathbb{R}^T$ defined by
    \begin{align*}
        F_T(x,z) = \sigma(Ax + Cz + \jhpdb)
    \end{align*}
    where
    \begin{itemize}
        \item $\sigma \in C^{1}(\mathbb{R})$ is $1$-finite (see \cite{HORNIK1990551} for the definition of $\ell$-finite)
        \item $A$ is a $T \times N$ random matrix, where $T >N$ and the first $N$ rows of $A$ form an $N \times N$ random submatrix with 2-norm less than 1 almost surely. The $j^{\mathrm{th}}$ row of $A$ (where $j > N$), denoted $A_j$, is a random variable with full support on $(\mathbb{R}^N)^{\top}$
        \item $C$ is a $T \times d$ random matrix with $j^\mathrm{th}$ row $C_j$, a random variable with full support on $(\mathbb{R}^d)^{\top}$
        \item $\jhpdb$ is a random $T$-vector with $j\mathrm{th}$ entry $\jhpdb_j$, a random variable with full support on $\mathbb{R}$.
    \end{itemize}
    Let $C \times K$ be an arbitrary compact subset of $\mathbb{R}^N \times \mathbb{R}^d$. Then, almost surely, $\mathcal{F}$ is  a class of linear universal approximators on $L^2(C \times K, \mathbb{R})$.
    \label{ESNs_are_universal}
\end{theorem}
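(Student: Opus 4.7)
The plan is to reduce this statement to the classical Hornik universal approximation theorem for single hidden layer feedforward networks, and then transfer the approximation from deterministic weights to the random-feature setting using the full-support hypothesis. The key observation is that for any $W \in \mathbb{R}^T$,
\begin{align*}
    W^{\top} F_T(x,z) = \sum_{j=1}^T W_j\, \sigma(A_j x + C_j z + \jhpdb_j)
\end{align*}
is exactly the output of a single hidden layer feedforward neural network on input $(x,z)$, with hidden weights $(A_j, C_j, \jhpdb_j)$ and linear output weights $W_j$. So the task is to show that, although the hidden weights are random rather than freely chosen, the ensemble is almost surely rich enough to realise any Hornik-style approximation.

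First, I would invoke Hornik's theorem: since $\sigma$ is $1$-finite and $C \times K$ is compact, for any $g \in L^2(C \times K, \mathbb{R})$ and any $\epsilon > 0$ there exist an integer $n$ and deterministic parameters $\{(\hat A_k, \hat C_k, \hat b_k, \hat w_k)\}_{k=1}^n$ with
\begin{align*}
    \bigg\| \sum_{k=1}^n \hat w_k\, \sigma(\hat A_k x + \hat C_k z + \hat b_k) - g \bigg\|_{L^2(C \times K)} < \tfrac{\epsilon}{2}.
\end{align*}
Next, since $\sigma$ is $C^1$ and the inputs $(x,z)$ range over the compact set $C \times K$, a standard continuity/compactness argument gives a $\delta > 0$ such that whenever $\|(A, C, b) - (\hat A_k, \hat C_k, \hat b_k)\| < \delta$, one has $\|\sigma(Ax+Cz+b) - \sigma(\hat A_k x+\hat C_k z+\hat b_k)\|_{L^2(C\times K)} < \epsilon/(2n(1+\max_k|\hat w_k|))$.

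Now I exploit randomness. The constraint on the first $N$ rows of $A$ is irrelevant for approximation: I restrict attention to rows with index $j > N$, and assign output weight zero to all others. For $j > N$ the triples $(A_j, C_j, \jhpdb_j)$ are independent with full support on $(\mathbb{R}^N)^\top \times (\mathbb{R}^d)^\top \times \mathbb{R}$, so each lands in the $\delta$-ball around any fixed $(\hat A_k, \hat C_k, \hat b_k)$ with strictly positive probability $p_k(\delta) > 0$. The second Borel--Cantelli lemma then ensures that, almost surely, for every $k \in \{1,\dots,n\}$ infinitely many indices $j > N$ produce triples in the $\delta$-ball around $(\hat A_k, \hat C_k, \hat b_k)$. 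Hence there exists a random $T_0$ (depending on $g$, $\epsilon$ and the sample) such that for every $T > T_0$ one can pick pairwise distinct indices $j_1,\dots,j_n \in \{N+1,\dots,T\}$ realising these approximations; defining $W_* \in \mathbb{R}^T$ by $(W_*)_{j_k} = \hat w_k$ and zero elsewhere, the triangle inequality yields $\|W_*^\top F_T - g\|_{L^2} < \epsilon$.

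The remaining subtlety is getting the ``almost surely'' quantifier on the \emph{outside} of the universal quantifier over $g$ and $\epsilon$. I would handle this using separability of $L^2(C \times K, \mathbb{R})$: fix a countable dense subset $\{g_n\}$ and take rationals $\epsilon = 1/m$; the argument above produces, for each pair $(n,m)$, a probability-one event on which the approximation holds, and the countable intersection still has probability one. A standard triangle-inequality and density argument then extends the approximation to all $g \in L^2(C \times K, \mathbb{R})$ and all $\epsilon > 0$ on this full-probability event. The main technical obstacle I anticipate is making the perturbation estimate in the continuity step fully rigorous: $\sigma$ is only $C^1$, not globally Lipschitz, so I must keep the arguments $A_j x + C_j z + \jhpdb_j$ inside a compact set, which requires choosing $\delta$ small enough that the random perturbations of the weights keep their affine images on $C \times K$ in a bounded neighbourhood of those of the $(\hat A_k, \hat C_k, \hat b_k)$, where the derivative of $\sigma$ is bounded.
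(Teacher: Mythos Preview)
Your proposal is correct and takes a genuinely different route from the paper. The paper's own proof is essentially a one-line citation: it invokes a ``Random Universal Approximation Theorem'' from the authors' earlier work to obtain, for each fixed $g$ and $\epsilon$ and each $\alpha\in(0,1)$, a $T_0$ such that the approximation holds with probability at least $\alpha$, and then concludes. You instead build the result from first principles: Hornik's deterministic theorem supplies a finite network, a $C^1$/compactness perturbation estimate makes each hidden unit stable under small weight changes, the full-support hypothesis plus Borel--Cantelli guarantees that the random rows with index $j>N$ eventually visit every required $\delta$-ball, and separability of $L^2(C\times K)$ lets you move the ``almost surely'' outside the quantifiers over $g$ and $\epsilon$.

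What each approach buys: the paper's proof is short but opaque, exporting all the work to an external reference. Your proof is self-contained and makes explicit exactly where each hypothesis is used ($1$-finiteness for Hornik, $C^1$ for the perturbation bound, full support for Borel--Cantelli, compactness of $C\times K$ for both the Hornik step and for confining the affine arguments so that the local Lipschitz constant of $\sigma$ is controlled). You also confront the quantifier-exchange issue directly via separability, whereas the paper's final sentence (``since $\mathcal{F}$ is a class of linear universal approximators for any $\alpha\in(0,1)$, $\mathcal{F}$ is almost surely a class of linear universal approximators'') does not by itself justify moving the probability-one event outside the universal quantifiers over $g$ and $\epsilon$.

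One caveat worth flagging: your second Borel--Cantelli step needs the triples $(A_j,C_j,b_j)$ for $j>N$ to be independent across $j$, which the theorem statement does not assert explicitly (it only stipulates full support row by row). This is surely the intended hypothesis, and is consistent with the i.i.d.\ construction used in the paper's numerical section, but you should state it as an assumption.
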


\begin{proof}
    Fix $g \in L^2(C \times K, \mathbb{R})$ and $\epsilon > 0$. Then for any $\alpha \in (0,1)$, it follows from the Random Universal Approximation Theorem \citep[Theorem 2.4.5.]{Embedding_and_approximation_theorems}) that there exists a $T_0 \in \mathbb{N}$ such that for any $T > T_0$, with probability at least $\alpha$,
    \begin{align*}
        \rVert W^\top F_T - g \lVert_{L^2} < \epsilon,
    \end{align*}
    hence $\mathcal{F}$ is a class of linear universal approximators. Since $\mathcal{F}$ is a class of linear universal approximators for any $\alpha \in (0,1)$, $\mathcal{F}$ is almost surely a class of linear universal approximators.
\end{proof}

To construct such an ESN in practice, we create a reservoir system $F : \mathbb{R}^T \times \mathbb{R}^d \to \mathbb{R}^T$ by defining
\begin{align*}
    F(x,z) = \sigma \big( [A,0]x + Cz + \jhpdb \big)
\end{align*}
where $[A,0]$ is the $T \times T$ matrix where the first $N$ columns form the matrix $A$ and the remaining columns are 0. Suppose we truncate at $N$ the state vectors $x \in \mathbb{R}^T$ by applying the canonical projection $\pi : \mathbb{R}^T \to \mathbb{R}^N$, and denote the truncation $\pi(x) = \bar{x} \in \mathbb{R}^N$. The dynamics of the truncated vectors $\bar{x}$ are given by the (state contracting) state space system $\pi \circ F_T : \mathbb{R}^N \times \mathbb{R}^d \to \mathbb{R}^N$, which is also an ESN as is defined by
\begin{align*}
    \pi \circ F_T(\bar{x},z) = \sigma(\bar{A}\bar{x} + \bar{C}z + \bar{\jhpdb}).
\end{align*}
Here, the $N \times N$ reservoir matrix $\bar{A}$ is created by truncating at $N$ the rows and columns of $A$. The $N \times d$ input matrix $\bar{C}$ is created by truncating at $N$ the rows of $C$. The $N$-vector $\bar{\jhpdb}$ is created by truncating at $N$ the entries of $\jhpdb$.
We conclude that Echo State Networks with (appropriately chosen) randomly generated internal weights are a class of linear universal approximators that each give rise to a state synchronisation map.

We demanded that the $T \times T$ reservoir matrix take the form $[A,0]$, whereas in practice, the reservoir matrix does not have this structure. We imposed this condition to simplify the proofs, but we believe, based on numerical evidence in the literature, that this choice of shape is not necessary. 

There is one more technical lemma we will include here before presenting the main theorem (Theorem \ref{least_sqs_thm}) of the paper. Recall that topological spaces have a natural Borel sigma algebra and are therefore measurable spaces. On such spaces we can integrate real valued functions. If $A$ and $B$ are homeomorphic topological spaces, then integration on $A$ is essentially the same as integration on $B$. We use this observation in Theorem \ref{least_sqs_thm} to move between integration on the topological space $M$ to integration on the image $f(M)$. This demands the highly non-trivial assumption that the SSM $f$ is a homeomorphism. The observation is made formal in the following lemma.

\begin{lemma}
    (Change of variables) Let $A,B$ be homeomorphic topological spaces and suppose $y \in \text{Hom}(A,B)$. The topologies on $A,B$ induce Borel Sigma algebras $\mathscr{A},\mathscr{B}$ on $A,B$ respectively. Let $\mu_A$ be a measure on $A$ and $\mu_B$ a measure on $B$ (called the pushforward measure) defined $\mu_B(b) = \mu_A(y^{-1}(b))$ for all $b \in \mathscr{B}$. Then for any $\mu_B$ measurable function $g : B \to \mathbb{R}$
    \begin{align*}
        \int_A g \circ y \ d \mu_A = \int_B g \ d \mu_B.
    \end{align*}
    \label{gy_lemma}
\end{lemma}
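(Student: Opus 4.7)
The plan is to prove this via the standard measure-theoretic ``machine'': establish the identity first for indicator functions, extend by linearity to simple functions, then pass to non-negative measurable functions by monotone convergence, and finally split an arbitrary integrable $g$ into its positive and negative parts. Before starting, I would check that everything in sight is well-defined: because $y$ is a homeomorphism it is in particular continuous, so $y^{-1}(E) \in \mathscr{A}$ whenever $E \in \mathscr{B}$, which means both the pushforward $\mu_B$ and the composition $g\circ y$ are genuinely Borel measurable objects. This is where the homeomorphism hypothesis is actually used; strictly speaking measurability of $y$ would suffice, but the statement of the lemma only needs it in the homeomorphism form in which it will be applied later.

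For the indicator step, fix $E \in \mathscr{B}$ and let $g = \mathbf{1}_E$. Then $g \circ y = \mathbf{1}_{y^{-1}(E)}$, and directly from the definition of the pushforward,
\begin{align*}
\int_A g \circ y \, d\mu_A \;=\; \mu_A\bigl(y^{-1}(E)\bigr) \;=\; \mu_B(E) \;=\; \int_B g \, d\mu_B.
\end{align*}
By linearity of the integral, the identity then extends to any simple function $g = \sum_{i=1}^{n} c_i \mathbf{1}_{E_i}$ with $E_i \in \mathscr{B}$.

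Next I would take an arbitrary non-negative $\mathscr{B}$-measurable $g$ and choose an increasing sequence of non-negative simple functions $g_n \nearrow g$ pointwise (the standard dyadic approximation works). Then $g_n \circ y \nearrow g \circ y$ pointwise on $A$, and two applications of the monotone convergence theorem, one on each side, together with the simple-function case already established, give
\begin{align*}
\int_A g \circ y \, d\mu_A \;=\; \lim_{n\to\infty} \int_A g_n \circ y \, d\mu_A \;=\; \lim_{n\to\infty} \int_B g_n \, d\mu_B \;=\; \int_B g \, d\mu_B.
\end{align*}
Finally, for a general real-valued $g$, I would write $g = g^+ - g^-$ where $g^\pm$ are non-negative and measurable, apply the non-negative case to each, and subtract; integrability of $g$ with respect to $\mu_B$ is equivalent to integrability of $g \circ y$ with respect to $\mu_A$ by the non-negative case applied to $|g|$, so the subtraction is legitimate.

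There is no real obstacle here: the result is entirely standard, and the only subtle point is making sure the pushforward is well-defined, which is handled by the continuity half of the homeomorphism assumption. The proof is essentially a bookkeeping exercise in the standard ``indicators, simple, non-negative, signed'' extension scheme.
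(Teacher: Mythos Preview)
Your proof is correct: the standard ``indicators $\to$ simple $\to$ non-negative $\to$ signed'' machine works exactly as you outline, and your remark that only the measurability (continuity) of $y$ is really needed is accurate.

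The paper, however, does not give a proof at all; it simply observes that the lemma is a special case of Theorem~3.6.1 in Bogachev's \emph{Measure Theory} and cites that reference. So the difference is not one of mathematical strategy but of presentation: you supply a self-contained elementary argument, while the paper defers to a standard textbook result. Your version has the advantage of being self-contained and of making explicit where the homeomorphism hypothesis enters (and that it is stronger than necessary); the paper's version has the advantage of brevity and of pointing the reader to the general change-of-variables theorem of which this is an instance.
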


\begin{proof}
    This is a special case of Theorem 3.6.1 in \cite{Bogachev}.
\end{proof}

\subsection{A Training Theorem For ESNs}

Before we finally plunge into the statement and proof of the main theorem, we will describe the result in words.
Suppose we have an ergodic dynamical system $\phi : M \to M$, which we observe via the function $\omega : M \to \mathbb{R}^d$ and that our goal is to approximate a target function $u : M \to \mathbb{R}$.
Suppose we have at our disposal a class $\mathcal{F}$ of linear universal approximating state maps. For example, $\mathcal{F}$ could be a collection of arbitrarily high dimensional ESNs. Make the additional (and non trivial) assumption that the state maps give rise to an SSM that is homeomorphic onto its image. Suppose then that the state map $F$ is driven with observations of a trajectory $z_k = \omega \circ \phi^k(m_0)$ originating from a generic point $m_0$. This creates a sequence of reservoir states $x_k$ that satisfy
\begin{align*}
    x_{k+1} = F(x_k,z_k).
\end{align*}
We also assemble a sequence of scalar targets $u \circ \phi^k(m_0)$.

Suppose we use regularised least squares regression to minimise the difference between the linear mapping on the observations $W^{\top} x_k$ and the targets $u \circ \phi^k(m_0)$. Then we can conclude that the ergodic average difference between the mapping on the data and the target map $u$ can be made smaller than the arbitrary threshold $\epsilon$. This requires that the trajectory length $\ell$ and state map dimension $T$ are sufficiently large, while ensuring the regularisation parameter $\lambda > 0$ is sufficiently small.

We remark that a notable weakness of Theorem \ref{least_sqs_thm} is its non-constructive natue, because the actual values for $\ell$, $T$ and $\lambda$ are not computed in terms of $\epsilon$.

\begin{theorem}
    Let $M$ be a topological space, and suppose that $\phi\in \text{Hom}(M)$ is ergodic with invariant measure $\mu$. Let $m_0$ be a generic point in $M$. Let $\omega \in C^0(M,\mathbb{R}^d)$ be the observation function and suppose that $u \in L^2(\mu)(M, \mathbb{R})$ is a target function we wish to approximate. 
    
    Suppose that $\mathcal{F}$ is a class of linear universal approximators on $L^2(C \times K, \mathbb{R})$ on every compact $C \subset \mathbb{R}^N,K \subset \mathbb{R}^d$. Let $(s_T)_{T \in \mathbb{N}} : \mathbb{R}^T \to \mathbb{R}^N$ be a sequence of maps. Suppose (for each large enough $T$) the state map $s_T \circ F_T : \mathbb{R}^N \times \mathbb{R}^d \to \mathbb{R}^N$ admits an SSM $f \in \text{Hom}(M,f(M))$.
    For each $T, \ell \in \mathbb{N}$, and $\lambda > 0$ let $W_{\ell} \in \mathbb{R}^T$ be the vector obtained by minimising the regularised least squares difference
    \begin{align*}
        \sum_{k = 0}^{\ell} \rVert W^\top F_T(f \circ \phi^{k-1}(m_0) , \omega \circ \phi^k(m_0)) - u \circ \phi^k(m_0) \lVert^2& \\
        + \lambda \lVert W \rVert^2&.
    \end{align*}
    Then, for any $\epsilon > 0$, there exists $\lambda^* > 0$ and $\ell_0, T_0 \in \mathbb{N}$ such that for all $\lambda \in (0,\lambda^*)$ and $\ell > \ell_0, T > T_0 $
    \begin{align*}
        \lVert W_{\ell}^{\top} F_T(f \circ \phi^{-1}, \omega) - u \rVert_{L^2(\mu)}^2 < \epsilon.
    \end{align*}
    \label{least_sqs_thm}
\end{theorem}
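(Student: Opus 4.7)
The plan is to bound the $L^2(\mu)$ error through three successive reductions: passing to the $\ell \to \infty$ ergodic limit via Lemma \ref{W_ell_lemma}, using optimality of the resulting Tikhonov minimiser, and finally invoking linear universal approximation to supply a good comparison vector.

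First I would set $g(m) := F_T\bigl(f \circ \phi^{-1}(m),\, \omega(m)\bigr)$, so that the sum minimised in the statement of the theorem coincides with the ergodic sum in Lemma \ref{W_ell_lemma} applied to this $g$ and the target $u$. Continuity of $F_T, f, \phi^{-1}, \omega$ gives $g \in L^2(\mu)(M,\mathbb{R}^T)$, so Lemma \ref{W_ell_lemma} applies directly and $W_\ell$ converges as $\ell \to \infty$ to the unique minimiser $W_\infty = W_\infty^{(T,\lambda)}$ of
\begin{align*}
\lVert W^\top g - u \rVert_{L^2(\mu)}^2 + \lambda \lVert W \rVert^2.
\end{align*}
Continuity of the quadratic $W \mapsto \lVert W^\top g - u \rVert_{L^2(\mu)}^2$ then furnishes an $\ell_0$ beyond which $\lVert W_\ell^\top g - u \rVert_{L^2(\mu)}^2 \leq \lVert W_\infty^\top g - u \rVert_{L^2(\mu)}^2 + \epsilon/3$. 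By the very definition of $W_\infty$, for any candidate $W_* \in \mathbb{R}^T$,
\begin{align*}
\lVert W_\infty^\top g - u \rVert_{L^2(\mu)}^2 \leq \lVert W_*^\top g - u \rVert_{L^2(\mu)}^2 + \lambda \lVert W_* \rVert^2,
\end{align*}
so the task reduces to exhibiting a single $W_*$ for which both terms on the right are small.

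To construct $W_*$ I would exploit the hypothesis that the SSM $f$ is a homeomorphism onto its image: since $\phi$ is also a homeomorphism, $m \mapsto f \circ \phi^{-1}(m)$ is a homeomorphism from $M$ onto $f(M) \subset \mathbb{R}^N$. Define $\tilde{u}$ on $f(M) \times \omega(M)$ by $\tilde{u}(y,z) := u\bigl(\phi \circ f^{-1}(y)\bigr)$, so that $\tilde{u}\bigl(f \circ \phi^{-1}(m), \omega(m)\bigr) = u(m)$ for every $m \in M$. After first approximating $u$ in $L^2(\mu)$ by a continuous function (using density of $C^0(M)$ in $L^2(\mu)$) and extending the resulting $\tilde{u}$ continuously to a compact box $C \times K \supset f(M) \times \omega(M)$, Theorem \ref{ESNs_are_universal} furnishes, for any $\delta > 0$ and all $T$ beyond some $T_0$, a vector $W_* \in \mathbb{R}^T$ with $\lVert W_*^\top F_T - \tilde{u} \rVert$ small on $C \times K$. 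The change-of-variables identity in Lemma \ref{gy_lemma}, applied to the continuous injection $\Psi(m) := (f \circ \phi^{-1}(m), \omega(m))$ under which $u = \tilde{u} \circ \Psi$, converts this approximation into the bound $\lVert W_*^\top g - u \rVert_{L^2(\mu)}^2 < \epsilon/3$. With $W_*$ now fixed, shrinking $\lambda$ below some $\lambda^* > 0$ guarantees $\lambda \lVert W_* \rVert^2 < \epsilon/3$, and combining the three estimates proves the theorem.

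The main obstacle lies in the change-of-measure step: Theorem \ref{ESNs_are_universal} supplies $L^2$ approximation with respect to Lebesgue measure on $C \times K$, whereas what is actually needed is an estimate in $L^2$ of the pushforward $\Psi_* \mu$ supported on $\Psi(M) \subset C \times K$, and these two measures can be mutually singular. The cleanest route is to upgrade the $L^2$-Lebesgue universal approximation to a uniform approximation on the compact image $\Psi(M)$, at which point it transfers automatically to any probability measure supported there, including $\Psi_* \mu$; managing this upgrade, together with the careful order of quantifiers when choosing $\delta$ to absorb the preliminary continuous approximation of $u$, is where the real technical work sits.
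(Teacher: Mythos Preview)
Your proposal follows essentially the same route as the paper: the same three-term $\epsilon/3$ decomposition, the same use of Lemma~\ref{W_ell_lemma} to pass to the ergodic limit $W_\infty$, the same appeal to optimality of $W_\infty$ against a comparison vector $W_*$, and the same change-of-variables step (your $\Psi$ is the paper's $y$, and your $\tilde u$ plays the role of the paper's $u\circ y^{-1}$). The only cosmetic difference is the order in which you fix $T$, $\lambda$, and $\ell$; the logical dependencies are identical.

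Where you go further than the paper is in flagging the measure mismatch: the hypothesis gives linear universal approximation in $L^2(C\times K,\mathbb{R})$ with respect to Lebesgue measure, whereas what the argument needs is approximation in $L^2(\mu')$ for the pushforward $\mu' = \Psi_*\mu$. The paper simply writes ``using the assumption that $\mathcal{F}$ is a class of linear universal approximators'' and passes directly to an $L^2(\mu')$ bound, without addressing this. Your proposed workaround---upgrading to uniform approximation on the compact set $\Psi(M)$, which then dominates any probability measure supported there---is a reasonable way to close the gap, though it requires strengthening the universal approximation hypothesis beyond what is literally stated. So your proposal is not only aligned with the paper's proof but is in fact more scrupulous about a point the paper elides.
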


\begin{proof}
    Let $y : M \to y(M) \subset (\mathbb{R}^N \times \mathbb{R}^d)$ be defined by
    \begin{align*}
        y(m) = ( f \circ \phi^{-1}(m) , \omega(m) ) \ \forall m \in M
    \end{align*}
    and note that $F_T(f \circ \phi^{-1}, \omega) = F_T \circ y$ and that
    $y \in \text{Hom}(M, y(M) )$ because $f \in \text{Hom}(M , f(M))$. Now fix $\epsilon > 0$. Let $\mu'$ be a measure defined on $y(M) \subset (\mathbb{R}^N \times \mathbb{R}^d)$ by $\mu'(\sigma) = \mu(y^{-1}(\sigma))$ for all measurable subsets $\sigma$ of $f(M)$.
    Using the assumption that $\mathcal{F}$ is a class of linear universal approximators, we can choose $T_0$ sufficiently large that for any $T > T_0$ there exists $W_* \in \mathbb{R}^T$ such that
    \begin{align*}
        \lVert W_*^{\top} F_T - u \circ y^{-1} \rVert_{L^2(\mu')}^2 < \frac{\epsilon}{3},
    \end{align*}
    hence (by lemma \ref{gy_lemma})
    \begin{align*}
        \lVert W_*^{\top} F_T \circ y - u \rVert_{L^2(\mu)}^2 = \lVert W_*^{\top} F_T - u \circ y^{-1} \rVert_{L^2(\mu')}^2  < \frac{\epsilon}{3}.
    \end{align*}
    Now let
    \begin{align*}
        \lambda^* = \frac{\epsilon}{3 \lVert W_* \rVert^2}
    \end{align*}
    and $\lambda \in (0,\lambda^*)$. Define the sequence $(W_{\ell})_{\ell \in \mathbb{N}}$ such that, for each $\ell \in \mathbb{N}$, the vector $W_{\ell} \in \mathbb{R}^T$ is the unique minimiser of the regularised least squares difference
    \begin{align*}
        \frac{1}{\ell} \bigg( \sum_{k = 0}^{\ell-1} \rVert W^\top F_T(f \circ \phi^{k-1}(m_0) , \omega \circ \phi^k(m_0)) - u \circ \phi^k(m_0) \lVert^2& \\ 
        + \lambda \lVert W \rVert^2& \bigg).
    \end{align*}
    By lemma \ref{W_ell_lemma}, $(W_{\ell})_{\ell \in \mathbb{N}}$ converges as $\ell \to \infty$ to $W_{\infty}$ which minimises 
    \begin{align*}
        \lVert W^{\top} F_T(f \circ \phi^{-1}, \omega) - u \rVert_{L^2(\mu)}^2 + \lambda \lVert W \rVert^2.
    \end{align*}
    Now we choose $\ell_0$ such that for all $\ell > \ell_0$ 
    \begin{align*}
        \rVert W_{\ell}^T F_T(f \circ \phi^{-1}, \omega) - W_{\infty}^T F_T(f \circ \phi^{-1}, \omega) \lVert_{L^2(\mu)}^2 < \frac{\epsilon}{3}.
    \end{align*}
    Now the proof proceeds directly
    \begin{align*}
        &\lVert W_{\ell}^T F_T(f \circ \phi^{-1},\omega) - u \rVert_{L^2(\mu)}^2 \\
        =&\lVert W_{\ell}^T F_T(f \circ \phi^{-1}, \omega) - W_{\infty}^T F_T(f \circ \phi^{-1}, \omega) \\
        +& W_{\infty}^T F_T(f \circ \phi^{-1},\omega) - u \rVert_{L^2(\mu)}^2 \\
        \leq&\lVert W_{\ell}^T F_T(f \circ \phi^{-1}, \omega) - W_{\infty}^T F_T(f \circ \phi^{-1}, \omega) \rVert_{L^2(\mu)}^2 \\
        +& \lVert W_{\infty}^T F_T(f \circ \phi^{-1}, \omega) - u \rVert_{L^2(\mu)}^2 \\
        <& \frac{\epsilon}{3} + \lVert W_{\infty}^T F_T(f \circ \phi^{-1}, \omega) - u \rVert_{L^2(\mu)}^2 \\
        \leq&  \frac{\epsilon}{3} + \lVert W_{\infty}^T F_T(f \circ \phi^{-1}, \omega) - u \rVert_{L^2(\mu)}^2 + \lambda \lVert W_{\infty} \rVert^2 \\
        \leq& \frac{\epsilon}{3} + \lVert W_{*}^T F_T(f \circ \phi^{-1}, \omega) - u \rVert_{L^2(\mu)}^2 + \lambda \lVert W_{*} \rVert^2 \\
        <& \frac{\epsilon}{3} + \frac{\epsilon}{3} + \lVert W_{*}^T F_T(f \circ \phi^{-1}, \omega) - u \rVert_{L^2(\mu)}^2 \\
        =& \frac{\epsilon}{3} + \frac{\epsilon}{3} + \lVert W_{*}^T F_T \circ y - u \rVert_{L^2(\mu)}^2 \\
        <& \frac{\epsilon}{3} + \frac{\epsilon}{3} + \frac{\epsilon}{3} = \epsilon.
    \end{align*}
    \end{proof}

Theorem \ref{least_sqs_thm} guarantees an approximation in the $L^2(\mu)$ norm, which is sadly weaker than the $C^1$ norm. That is to say, a sequence which converges in $C^1$ also converges in $L^2(\mu)$, but the converse does not hold in general. This distinction is particularly relevant when the problem is chaotic time series forecasting. In this case, the target function is the \emph{next step map} $u = \omega \circ \phi$, and we recursively feed predictions into the state space map to create a trajectory into the future. An example is the ESN autonomous phase (equation \eqref{informal_autonomous_phase}). A weakness of using ESN autonomous dynamics for time series forecasting is that small approximation errors accumulate resulting in a predicted trajectory that diverges from the true trajectory in the far future. That said, \cite{Embedding_and_approximation_theorems} show that under certain conditions (crucially that the next step map $u = \omega \circ \phi$ is well approximated in the $C^1$ norm) the ESN autonomous phase will adopt dynamics that are topologically conjugate to the original dynamical system.

We must conclude that least squares regression does not guarantee a topologically conjugate autonomous phase, but we note that real data sets are contaminated by noise and finite precision arithmetic where an $L^2(\mu)$ approximation may be most suitable. Moreover, computing the (regularised) least squares solution using the SVD decomposition, or some other algorithm, is much faster than minimising the maximal pointwise distance, which may be necessary to yield a good $C^1$ approximation. Indeed, despite the theoretical limitations of the regularised least squares approach it seems to work well in practice. In fact we can interpret bad $C^1$ approximations in the parlance of machine learning as overfitted solutions, as they fit the training data well, in exactly the terms that we define a good fit, but may fail to make good predictions about the unseen future.

\subsection{Convergence rate of the time average to the space average}

Theorem \ref{least_sqs_thm} guarantees, under appropriate conditions, that with sufficiently many neurons $T$ and a sufficiently many training data $\ell$ we can obtain an arbitrarily good $L^2(\mu)$ approximation of a target function $u$. It is natural to wonder how many training data is required to achieve a given $L^2(\mu)$ approximation. To answer this, we turn our attention to the convergence rate of the time average to the space average
    \begin{align}
        \lim_{\ell \to \infty}\frac{1}{\ell} \sum_{k=0}^{\ell-1} s \circ \phi^{k}(m_0) = \int_{M} s \ d\mu \tag{\ref{Ergodic_Theorem_eqn}}
    \end{align}
as the timespan over which training data is collected grows. 
We want a uniform estimate for the rate of convergence for $s$ over all ergodic maps $\phi$. Unfortunately, no such estimate can possibly exist.
\cite{Kachurovskii1996} presents negative results that (in the author's words) \emph{leave no hope that estimates of the rate of convergence depending only on the averaged function $s$ can be obtained in ergodic theorems}. The negative results presented by \cite{Kachurovskii1996} prove that the amount of training data required is strictly dependant on the dynamical system.

Though we cannot say exactly how many data points we need for a good $L^2(\mu)$ approximation, the central limit theorem for ergodic dynamical systems suggests that for an initial point chosen uniformly over the invariant measure of $\phi$, the difference between the finite time average and space average converges to a mean $0$ normal distribution with standard deviation $1 / \sqrt{\ell}$. This is made precise by the central limit theorem for ergodic dynamical systems. Before we state the theorem, we recall the definition of H{\"o}lder continuity.

\begin{defn}
    (H{\"o}lder continuous) Let $(M,d)$ be a metric space. A map $s : M \to \mathbb{R}$ is called \emph{H{\"o}lder continuous} if there exist constants $p \in (0,1]$ and $K > 0$ such that
    \begin{align*}
        \lVert s(m) - s(m') \rVert \leq K d(m,m')^p
    \end{align*}
    for all $m,m' \in M$.
\end{defn}

\begin{theorem}
    (Central limit theorem for ergodic dynamical systems) Let $\phi : M \to M$ be ergodic with respect to the probability space $(M,\Sigma,\mu)$. Let $X_0$ be a uniform random variable with respect to the space $(M,\Sigma,\mu)$. Let $s \in L^1(\mu)(M,\mathbb{R})$ be H{\"o}lder continuous and denote the
    space average of $s$ by
    \begin{align*}
        \mathbb{E}[s] := \int_{M} s \ d\mu.
    \end{align*}
     Let the random variables $X_j := s \circ \phi^j(X_0)$ for $j=0,\ldots,\ell-1$ and denote the partial sum
     $S_\ell=X_0 + \cdots + X_{\ell-1}$.
     Then, for some $\sigma > 0$, the partial sum
     $S_\ell$ satisfies the central limit theorem:
    \begin{align}
        \lim_{\ell \to \infty} \mu \bigg( \bigg\{ \frac{S_\ell - \ell \mathbb{E}[s]}{\sqrt{\ell}} \leq z \bigg\} \bigg) = \frac{1}{2 \pi \sigma} \int_{-\infty}^z \mathrm{e}^{-\frac{\tau^2}{2 \sigma^2}} \ d\tau \nn
    \end{align}
    almost surely, or in other words $(S_\ell - \ell \mathbb{E}[s])/\sqrt{\ell}$ converges in law to $\mathcal{N}(0,\sigma^2)$.
\end{theorem}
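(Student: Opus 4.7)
The plan is to reduce the problem to a martingale central limit theorem via Gordin's martingale approximation. First, without loss of generality replace $s$ by $s - \mathbb{E}[s]$ so that the centred observable has zero space average; the rescaled partial sum in the statement is unchanged. Then I would seek a decomposition
\begin{align*}
    s = m + \psi - \psi \circ \phi,
\end{align*}
where $\psi \in L^2(\mu)$ is a ``corrector'' and $m \in L^2(\mu)$ is a reverse martingale difference for the decreasing filtration $\mathcal{F}_n := \phi^{-n}\Sigma$, i.e.\ $\mathbb{E}[m \mid \mathcal{F}_1] = 0$. Gordin's theorem furnishes such a decomposition as soon as $\sum_{n \geq 0} \lVert \mathbb{E}[s \mid \mathcal{F}_n] \rVert_{L^2(\mu)} < \infty$, and for Hölder observables on systems with sufficiently fast decay of correlations (for example the Lorenz flow treated later in the paper, which admits an SRB measure with exponential decay) this summability is standard.

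Given the decomposition, the partial sum telescopes
\begin{align*}
    S_\ell - \ell \mathbb{E}[s] = \sum_{j=0}^{\ell-1} m \circ \phi^j \; + \; \psi - \psi \circ \phi^\ell.
\end{align*}
Since $\phi$ preserves $\mu$, the random variable $\psi \circ \phi^\ell$ is equidistributed with $\psi \in L^2(\mu)$, so the coboundary term is $O_{\mathbb{P}}(1)$ and $(\psi - \psi\circ\phi^\ell)/\sqrt{\ell} \to 0$ in probability. The sequence $(m \circ \phi^j)_{j \geq 0}$ is stationary and ergodic, and by construction $\mathbb{E}[m \circ \phi^j \mid \phi^{-(j+1)}\Sigma] = 0$, so Billingsley's martingale central limit theorem applies and yields
\begin{align*}
    \frac{1}{\sqrt{\ell}} \sum_{j=0}^{\ell-1} m \circ \phi^j \xrightarrow{\mathcal{L}} \mathcal{N}(0,\sigma^2), \qquad \sigma^2 = \int_M m^2 \, d\mu.
\end{align*}
Slutsky's lemma then transfers the convergence to $(S_\ell - \ell\mathbb{E}[s])/\sqrt{\ell}$. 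The variance admits the Green--Kubo representation $\sigma^2 = \int s^2 \, d\mu + 2\sum_{k \geq 1} \int s \cdot (s \circ \phi^k) \, d\mu$ (with $s$ centred), and the hypothesis $\sigma > 0$ amounts to ruling out the degenerate case in which $s$ is itself a coboundary.

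The main obstacle, and really the only delicate step, is verifying Gordin's summability condition: bare ergodicity is \emph{not} enough to force a central limit theorem, and one must invoke the underlying hyperbolic structure of $\phi$ (equivalently, a quantitative decay of correlations for Hölder observables) in order to control $\lVert \mathbb{E}[s \mid \phi^{-n}\Sigma] \rVert_{L^2(\mu)}$. Once that bound is in hand, the telescoping identity, Slutsky's lemma and the martingale CLT combine mechanically. For an abstract ergodic measure-preserving system with no mixing assumption the conclusion can fail outright, so in practice the theorem should be read as applying to systems in the Hölder-mixing class — uniformly expanding maps, Axiom A attractors, the Lorenz flow — where Gordin's hypothesis is known to hold.
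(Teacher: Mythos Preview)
The paper does not actually prove this statement: its entire proof is a bare citation to \cite{ErgodicTheory2010}. So there is no ``paper's own argument'' to compare against beyond whatever that reference contains.

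Your sketch via Gordin's martingale approximation and the Billingsley--Ibragimov martingale CLT is the standard route to such results and is almost certainly what the cited text does (or a spectral/transfer-operator variant that ultimately reduces to the same decomposition). The telescoping of the coboundary, the $O_{\mathbb{P}}(1)$ estimate, Slutsky, and the Green--Kubo formula for $\sigma^2$ are all correctly stated.

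You are also right to flag the gap in the hypotheses: the theorem as written assumes only ergodicity and H\"older continuity of $s$, and that is \emph{not} sufficient for a CLT --- one needs a quantitative mixing assumption on $\phi$ (exponential or summable decay of correlations for H\"older observables, or equivalently a spectral gap for the transfer operator) to obtain Gordin's summability condition $\sum_n \lVert \mathbb{E}[s \mid \phi^{-n}\Sigma] \rVert_{L^2} < \infty$. Your closing paragraph handles this correctly by restricting to the hyperbolic/H\"older-mixing class. The paper glosses over this entirely, so on this point your proposal is more careful than the paper itself.
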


\begin{proof}
    \cite{ErgodicTheory2010}.
\end{proof}

To see the connection between the central limit theorem and the work in this paper, suppose we choose a map $s$ that returns the matrix vector pair 
\begin{align*}
    s(m_0) &= \bigg( \big[f(m_0)f^\top(m_0) + I \lambda \big], f(m_0) u(m_0) \bigg) \\
    &=: (\Sigma_0, v_0),
\end{align*}
and define a sequence of pairs with $\ell$th pair
\begin{align*}
    (\Sigma_\ell, v_\ell) := \frac{1}{\ell}\sum_{k=0}^{\ell-1} s \circ \phi^k(m_0).
\end{align*}
Then it follows that
\begin{align*}
    W_\ell = \Sigma_\ell^{-1} v_\ell
\end{align*}
is the linear readout layer obtained by regularised least squares regression using $\ell$ data points. Furthermore, it follows from the central limit theorem that
for random initial points $m_0$ (distributed uniformly with respect to the invariant measure $\mu$) the sequence $(\Sigma_\ell, v_\ell)_{\ell \in \mathbb{N}}$ converges in law to a (multivariate) normal distribution, with variance converging with order $1/\ell$, and mean $(\Sigma,v)$ which satisfies 
\begin{align*}
    W_{\infty} = \Sigma^{-1}v.
\end{align*}
We note that the convergence of $(\Sigma_\ell, v_\ell)_{\ell \in \mathbb{N}}$ to $(\Sigma, v)$ with order $1/\sqrt{\ell}$ does not necessarily imply that $(W_{\ell})_{\ell \in \mathbb{N}}$ converges to $W_{\infty}$ at the same rate.

\section{The Lorenz attractor is stably mixing}
\label{Lorenz_is_stably_mixing}

We have shown that we can approximate, in the $L^2(\mu)$ sense, any target function on an ergodic dynamical system using an ESN and Tikhonov least squares. This partially explains the success enjoyed by \cite{Jaeger2001}, \cite{1556081}, \cite{416607}, \cite{4118282}, \cite{5645205},  \cite{Pathak2017}, \cite{Lokse2017}, \cite{YEO2019671}, \cite{Ashesh_2019}, \cite{Vlac_2019}, and \cite{Embedding_and_approximation_theorems}. Many authors including \cite{Ashesh_2019} successfully predict the future observations of the Lorenz system, while \cite{Pathak2017}, \cite{Vlac_2019}, and \cite{Embedding_and_approximation_theorems} additionally recover topological invariants including Lyapunov exponents, fixed point eigenvalues and homology groups. The authors are successful in their numerical experiments because the Lorenz attractor is \emph{mixing} which implies it is ergodic, suggesting the conditions Theorem \ref{least_sqs_thm} hold and we can $L^2(\mu)$ approximate target functions on the Lorenz attractor.

Proving that the Lorenz attractor is mixing was a tremendous achievement, built upon the works of \cite{1977DoSSR.234..336A}, \cite{Structural_stability_of_Lorenz_attractors}, \cite{pesin_1992}, \cite{PMIHES_1979__50__73_0}, and \cite{TUCKER19991197} culminating with the seminal paper by \cite{Tucker2002}, which resolved Smale's $14^\mathrm{th}$ problem 
\emph{`Is the dynamics of the ordinary differential equations of Lorenz (1963) that of
the geometric Lorenz attractor of Williams, Guckenheimer and Yorke? '} \citep{Smale1998}.
To formalise some of these ideas, we will begin with the definition of a mixing dynamical system.

\begin{defn}
    (Mixing) Let $\phi : M \to M$ be a measure preserving transformation on the measure space $(M,\Sigma,\mu)$ with $\mu(M) = 1$. Then $\phi$ is mixing if for any $A,B \in \Sigma$
    \begin{align}
        \lim_{\ell \to \infty} \mu\big(A \cap \phi^{-\ell}(B)\big) = \mu(A)\mu(B). \nn
    \end{align}
\end{defn}

\begin{lemma}
    (Mixing implies ergodic) Let $\phi : M \to M$ be a measure preserving transformation on the measure space $(M,\Sigma,\mu)$ with $\mu(M) = 1$. Suppose $\phi$ is mixing, then $\phi$ is ergodic. \label{mixing_implies_ergodic}
\end{lemma}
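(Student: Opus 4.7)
The plan is to derive ergodicity directly from the mixing hypothesis by testing it against an invariant set. Let $\sigma \in \Sigma$ be any measurable set with $\phi^{-1}(\sigma) = \sigma$; the goal is to show $\mu(\sigma) \in \{0,1\}$. The first observation I would make is that invariance iterates: applying $\phi^{-1}$ repeatedly yields $\phi^{-\ell}(\sigma) = \sigma$ for every $\ell \in \mathbb{N}$. This reduces the mixing condition to a trivially computable left-hand side.

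Next, I would apply the mixing definition with the choice $A = B = \sigma$. Because $\phi^{-\ell}(\sigma) = \sigma$, the sequence on the left becomes the constant
\begin{align*}
\mu\bigl(\sigma \cap \phi^{-\ell}(\sigma)\bigr) = \mu(\sigma \cap \sigma) = \mu(\sigma),
\end{align*}
so its limit is simply $\mu(\sigma)$. The mixing hypothesis identifies this limit with $\mu(A)\mu(B) = \mu(\sigma)^2$, and therefore $\mu(\sigma) = \mu(\sigma)^2$. Since $\mu(\sigma)$ is a real number satisfying $x = x^2$, either $\mu(\sigma) = 0$ or $\mu(\sigma) = 1$, which is exactly the definition of ergodicity.

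Because the argument is a one-line manipulation once the right test sets are chosen, there is no serious obstacle. The only subtlety worth flagging is the iteration $\phi^{-1}(\sigma) = \sigma \Rightarrow \phi^{-\ell}(\sigma) = \sigma$, which is needed to make the mixing limit collapse to $\mu(\sigma)$; without it one would only know $\mu(\sigma \cap \phi^{-1}(\sigma)) = \mu(\sigma)$, a single equation rather than a limiting statement. With that observation in place the proof is essentially immediate.
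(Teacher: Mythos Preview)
Your proof is correct and follows the same core idea as the paper: apply the mixing condition with $A=B$ equal to a $\phi$-invariant set and deduce $\mu(A)=\mu(A)^2$. Your version is in fact slightly more direct, since the paper passes through Ces\`aro averages before specializing to an invariant set, a detour that is not needed.
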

\begin{proof}
    Suppose $\phi$ is mixing and $A,B \in \Sigma$. Then
    \begin{align}
        \lim_{\ell \to \infty} \mu\big(A \cap \phi^{-\ell}(B)\big) &= \mu(A)\mu(B) \nn \\
        \implies \lim_{\ell \to \infty} \frac{1}{\ell} \sum_{k = 0}^{\ell-1} \mu\big(A \cap \phi^{-k}(B)\big) &= \mu(A)\mu(B) \nn \\
        \implies \lim_{\ell \to \infty} \frac{1}{\ell} \sum_{k = 0}^{\ell-1} \mu\big(A \cap \phi^{-k}(A)\big) &= \mu(A)^2 \label{last_line}.
    \end{align}
    Now suppose $\mu(A) = \mu\big(\phi^{-1}(A) \big)$. Then \eqref{last_line} reduces to $\mu(A) = \mu(A)^2$ hence $\mu(A) = 1$ or $\mu(A) = 0$, so $\phi$ is ergodic.
\end{proof}

\begin{defn}
    (Stably mixing)  Let $\phi : M \to M$ be a measure preserving transformation on the measure space $(M,\Sigma,\mu)$ with $\mu(M) = 1$. Then $\phi$ is stably mixing if sufficiently small $C^1$ perturbations of $\phi$ are mixing.
\end{defn}

\begin{theorem}
    The \cite{doi:10.1175/1520-0469(1963)020<0130:DNF>2.0.CO;2} system
\begin{equation}
\begin{array}{ll}
\dot{\xi}      & = \sigma(\upsilon - \xi) \\
\dot{\upsilon} & = \xi(\rho - \zeta) - \upsilon \\
\dot{\zeta}    & = \xi\upsilon - \beta \zeta
\end{array} \label{eqn:lorenz}
\end{equation}
with parameters $\sigma = 10$, $\beta = 8/3$, $\rho = 28$ admits a robust attractor that is stably mixing.
\label{thm:lorenz}
\end{theorem}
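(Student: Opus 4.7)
The plan is to combine Tucker's computer-assisted validation of the geometric model with the abstract ergodic theory of singular-hyperbolic attractors. First I would invoke Tucker's theorem, which resolves Smale's 14th problem: at the parameters $(\sigma,\beta,\rho)=(10,8/3,28)$ the flow of \eqref{eqn:lorenz} possesses a geometric Lorenz attractor in the sense of Afraimovich--Bykov--Shilnikov and Guckenheimer--Williams. Concretely, there is a two-dimensional cross-section $\Sigma$ transverse to the flow on which the Poincar\'e return map $P$ is defined away from the trace of the stable manifold of the origin, and $P$ admits a strong contracting invariant foliation $\mathcal{F}^s$ whose leaves are exponentially contracted by iteration.

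Second, I would quotient the return map along $\mathcal{F}^s$ to obtain a piecewise smooth one-dimensional map $\bar{P}:I\to I$ with a single discontinuity at the intersection of $\Sigma$ with $W^s(0)$ and with uniformly expanding branches. Applying the ergodic theory of one-dimensional piecewise expanding maps (Lasota--Yorke type arguments for the transfer operator on a space of bounded-variation or quasi-H\"older densities) would produce an absolutely continuous invariant probability measure $\bar{\mu}$ for $\bar{P}$ and, crucially, a spectral gap, from which mixing of $(\bar{P},\bar{\mu})$ follows.

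Third, I would lift $\bar{\mu}$ to an SRB measure $\mu_P$ for the full return map by pairing it with conditional measures on the stable leaves via Bowen--Ruelle and Pesin theory, and then suspend $\mu_P$ to obtain an SRB measure $\mu$ for the flow. Mixing of $\bar{P}$ upgrades to mixing of $P$ because distances along $\mathcal{F}^s$ are contracted exponentially, so test functions depending on both coordinates are well-approximated by leaf-constant functions in the limit. Mixing of $P$ then gives mixing of the suspension flow provided the return-time function is not cohomologous to a constant, which for Lorenz is automatic from the logarithmic divergence of the return time near $W^s(0)$. For the stably mixing conclusion I would then invoke the Morales--Pacifico--Pujals theorem that the geometric Lorenz attractor is singular-hyperbolic and that singular hyperbolicity is a $C^1$-open property: any sufficiently small $C^1$ perturbation of the vector field retains a singular-hyperbolic attractor whose return map admits the same type of contracting foliation and one-dimensional expanding quotient, so the preceding construction runs uniformly in a $C^1$-neighbourhood.

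The main obstacle is the transfer-operator analysis of $\bar{P}$ and the subsequent upgrade of mixing from the quotient to the flow. The subtlety is that $\bar{P}$ is only piecewise smooth, the return-time function has a logarithmic singularity, and one needs a spectral gap (not merely existence of an invariant density) that is robust under $C^1$ perturbations of the ambient vector field. Pushing this spectral information through the stable foliation and through the suspension without destroying mixing is the technical heart of the argument, and is precisely the content of \cite{Luzzatto2005}, on which the proof ultimately rests.
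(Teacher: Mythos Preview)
Your proposal is correct and ultimately rests on the same source as the paper: the paper's entire proof is the one-word citation \cite{Luzzatto2005}. You have gone further by sketching the internal architecture of that result (Tucker's validation of the geometric model, the one-dimensional expanding quotient, lifting to an SRB measure, and the robustness via singular hyperbolicity), but the approach is identical in substance.
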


\begin{proof}
    \cite{Luzzatto2005}
\end{proof}

Since the Lorenz attractor is stably mixing, so is any sufficiently good $C^1$ approximation to the evolution operator $\phi$, obtained by numerical methods.
Consequently, a numerically approximated Lorenz system is ergodic, by Lemma \ref{mixing_implies_ergodic}.
Thus, we expect that an ESN, trained using Tikhonov least squares, on a sequence of observations of a numerically integrated trajectory of the Lorenz attractor will $L^2(\mu)$ approximate arbitrary target functions on the attractor.

\section{Numerical experiments}
\label{Numerical_section}

Our goal is to use an ESN to learn a mapping from the $\xi$ component of the Lorenz attractor to the $\zeta$ component. We will sample data from a single trajectory of the Lorenz attractor. To this end, let $\phi:\mathbb{R}^3 \to \mathbb{R}^3$ denote a discretisation of the Lorenz system~(\ref{eqn:lorenz}) with time step $\tau$ i.e effectively
a discrete-time map of the form
\begin{align*}
    \phi(\xi,\upsilon,\zeta) = (\xi,\upsilon,\zeta) + \int_0^{\tau} (\dot{\xi},\dot{\upsilon},\dot{\zeta}) \ dt.
\end{align*}
We set the timestep $\tau = 0.01$ and initial condition $(\xi_0,\upsilon_0,\zeta_0) = (0, 1.0, 1.05)$. For these initial conditions and the parameter values as in \ref{thm:lorenz}, we computed a trajectory for a 40 time units (i.e. 4000 timesteps), illustrated in Figure~\ref{Lorenz_fig}.

\begin{figure}
  \centering
    \includegraphics[width=0.5\textwidth]{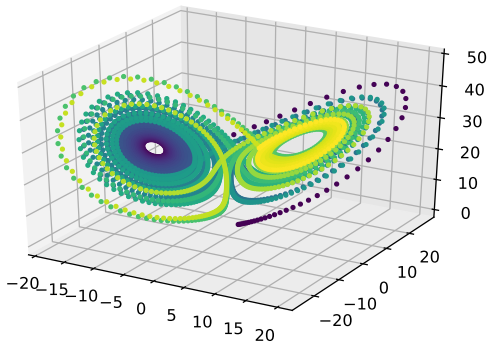}
    \caption{A typical trajectory of the Lorenz system~(\ref{eqn:lorenz}) computed for 4000 timesteps, represented by the individual dots at time intervals $\tau=0.01$. Colour indicates the direction of travel along the trajectory: darkest colours (blue) at the earliest times and lighest colours at the most recent times (yellow).}
    \label{Lorenz_fig}
\end{figure}

We select observation and target functions to be the first and third components of the Lorenz system, i.e. we choose the function $\omega(\xi,\upsilon,\zeta) = \xi$ so that the observations $z_k$ are the $\xi$ components of the trajectory at the sampled time points $t=k\tau$, so that
\begin{align*}
    z_k = \omega\circ\phi^k(\xi_0,\upsilon_0,\zeta_0)).
\end{align*}
We select the target function to be $\omega(\xi,\upsilon,\zeta) = \zeta$ so the targets $u_k$ are the $\zeta$ components of the trajectory:
\begin{align*}
    u_k = u\circ\phi^k(\xi_0,\upsilon_0,\zeta_0)).
\end{align*}
The trajectories of these two components of observations and targets are shown in Figure \ref{xz_fig}(a) and (b), respectively. 

\begin{figure}
    \centering
    \begin{subfigure}[b]{0.5\textwidth}
        \includegraphics[width=\textwidth]{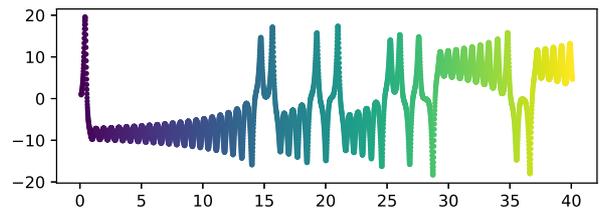}
        \caption{The $\xi$-component of the Lorenz trajectory (vertical axis) plotted against time (horizontal axis).}
        \label{fig:gull}
    \end{subfigure}
    ~ 
    \begin{subfigure}[b]{0.5\textwidth}
        \includegraphics[width=\textwidth]{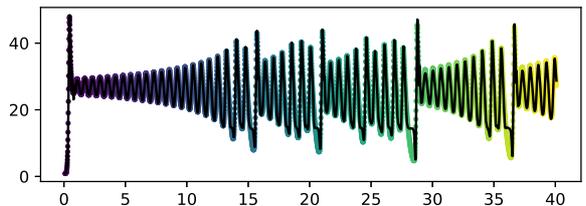}
        \caption{The $\zeta$-component of the Lorenz trajectory (vertical axis) plotted against time (horizontal axis). The black line at the $k^\mathrm{th}$ timestep indicates the approximation to this target time series given by $W^{\top}_{\infty}x_k$.}
        \label{fig:tiger}
    \end{subfigure}
    \caption{Observations $z_k$ and targets $u_k$ drawn from the Lorenz trajectory.}
    \label{xz_fig}
\end{figure}

Our goal is to use an ESN to predict the targets based on the observations. So, we set up an ESN with the following parameters:
\begin{itemize}
    \item Reservoir size: $T = 300$,
    \item Activation function: $\sigma = \tanh$,
    \item Input matrix $C$ and bias vector $\zeta$: i.i.d uniform random variables $\sim U[-0.05,0.05]$,
    \item Reservoir matrix $A$: i.i.d uniform random variables rescaled so that $\lVert A \rVert_2 = 1$,
    \item Regularisation parameter $\lambda = 10^{-9}$.
\end{itemize}
Iterating the ESN with observations $z_k$ creates a discrete-time sequence of reservoir states $x_k$, illustrated in Figure \ref{Reservoir_Lorenz_fig}, which shows a projection of the reservoir states onto their first the principal components.
\begin{figure}
  \centering
    \includegraphics[width=0.5\textwidth]{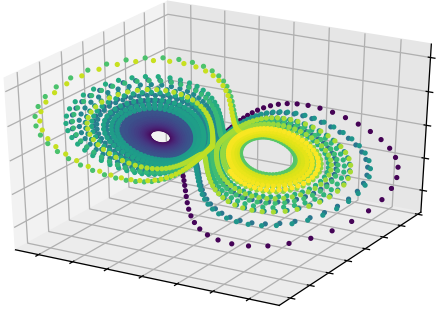}
    \caption{Illustration of the reservoir states of the ESN driven by inputs $z_k$ being the discrete-time samples observed from a trajectory of the Lorenz system. The figure shows the projection of the reservoir states onto their first 3 principal components.}
    \label{Reservoir_Lorenz_fig}
\end{figure}
We then solved the least squares problem
\begin{align*}
    \min_{W}\sum_{k=0}^{\ell-1} \lVert W^{\top} x_{k} - u_k \rVert^2 + \lambda\lVert W \rVert^2 \nonumber
\end{align*}
to determine the output layer $W$ using the SVD. This offline learning method is described by \cite{Deblurring_2006}. Our aim here is to understand how increasing the number of data points $\ell$ improves our approximation of the target function $u$. So we repeated this process with \emph{fewer} observation-target pairs, from 300 in increments of 100 up to 4000. For each value of $\ell$, we compute the best-fit readout layer $W$. We repeated this process once more for a 20,000 time step (i.e. 200 time unit) trajectory and computed the readout layer which for this case we denote by $W_{\infty}$, assuming that it is extremely close to the readout layer we would obtain in the limit of infinitely many time steps. For each readout layer $W$ obtained using fewer data points ($300 \leq \ell \leq 4000$) we estimated the error on the readout layer which we denote by WE:
\begin{align*}
    \text{WE} = \frac{\rVert W - W_\infty \lVert}{\lVert W_{\infty} \rVert},
\end{align*}
and the root mean square error (RMSE) between the targets and the approximation \emph{for the entire 20,000 point trajectory} 
\begin{align*}
    \text{RMSE} = \sqrt{\frac{1}{20000}\sum_{k=0}^{20000-1}\rVert W^{\top}x_k - u_k \lVert^2}.
\end{align*}
We expect that as the the number of data points $\ell$ grows the WE and RMSE will converge. The central limit theorem suggests that the matrix vector pairs $(\Sigma_{\ell},v_{\ell})_{_{\ell \in \mathbb{N}}}$ (which satisfy the Gauss normal equations $\Sigma_{\ell} W_{\ell} = v_{\ell}$)
will converge in law to a multivariate normal distribution, with standard deviation converging with order $1/\sqrt{\ell}$; as the number of data points $\ell$ tends to infinity. This suggests (but does not strictly imply) that the WE and RMSE might converge at a similar rate. We have been unable to derive expressions for the convergence of the RMSE and WE and remark that the need to compute $W$ via a least-squares fit means it is not obvious that these would share the convergence rate of $\Sigma_\ell$ and $v_\ell$. Typical numerical results for the convergence of the RMSE and WE are illustrated in Figure \ref{error_fig}.

\begin{figure}
    \centering
    \begin{subfigure}[b]{0.5\textwidth}
        \includegraphics[width=\textwidth]{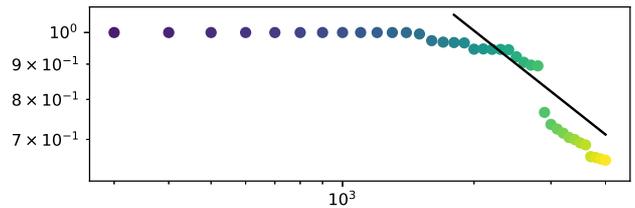}
        \caption{Log--log plot of the error on the linear readout layer $W$ (vertical axis) against number of data points (horizontal axis) used to train the readout layer $W$. The line $y = 45/\sqrt{\ell}$ is plotted in black as a guide to the eye.}
        \label{fig:W_error}
    \end{subfigure}
    ~ 
    \begin{subfigure}[b]{0.5\textwidth}
        \includegraphics[width=\textwidth]{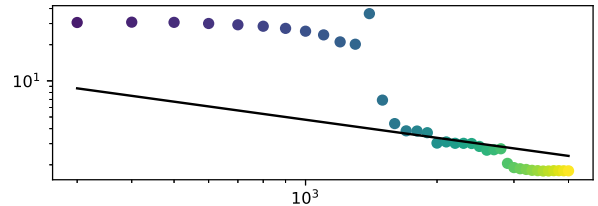}
        \caption{Log--log plot of the root mean square error (RMSE) (vertical axis) against number of data points (horizontal axis) used to train the readout layer $W$. The line $y = 150/\sqrt{\ell}$ is plotted in black as a guide to the eye.}
        \label{fig:RMS_error}
    \end{subfigure}
    \caption{Convergence of the error on the readout layer (WE) and convergence of the root mean square error (RMSE) displayed in log--log plots. Black lines indicate convergence with order $1/\sqrt{\ell}$ and are shown in order to compare the convergence to what might be expected if a central limit theorem applied.}
    \label{error_fig}
\end{figure}

The figures reveal that the convergence of the RMSE and WE is complicated. We observe sudden jumps which appear when the Lorenz trajectory switches to a different wing in the attractor, at such times presumably the ESN rapidly acquires new independent information which improves the fit. Furthermore, the convergence at least over this range of trajectory lengths does not (convincingly) converge with order $1/\sqrt{\ell}$. Since the sudden jumps occur on a timescale intrinsic to the dynamical system $\phi$ we conclue that the internal structure of the attractor and its dynamics plays an important role in the evolution of the error; appealing to the asymptotic behavior may not always be useful.

We pushed the numerics further, hoping to detect an asymptotic regime by repeating the numerical experiments with a much longer trajectory. We computed $W_{\infty}$ for a $100 \ 000$ point trajectory and compared this to the $W$ obtained for shorter time series of lengths $\ell = 1000, 2000, \ldots, 98000$. For each $\ell$ we computed the WE with 10 randomly generated realisations of the ESN. The results are shown in Figure \ref{jon_data_fig} and are also (sadly) inconclusive; there is no obvious regime over which the error decreases as a power law. Sudden decreases as the trajectory switches lobes on the attractor are still visible, and the rate of convergence remains complicated.

\begin{figure}
  \centering
    \includegraphics[width=0.5\textwidth]{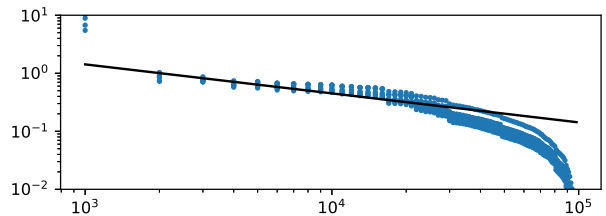}
    \caption{The error on the readout layer (WE) (vertical axis) shown against the number of data points $\ell$ (horizontal axis). The black line has equation $y = 45/\sqrt{\ell}$ as a guide to the eye. Results for 10 separate realisations of an ESN are shown.}
    \label{jon_data_fig}
\end{figure}

\section{Conclusions and future work}
\label{conclusions}

The main result of this paper (Theorem \ref{least_sqs_thm}) states that an ESN trained on a sequence of observations from an ergodic dynamical system (with invariant measure $\mu$) using Tikhonov least squares will $L^2(\mu)$ approximate any target function $u$. We then summarised the result by \cite{Luzzatto2005} which implies the Lorenz attractor exists and is mixing, hence ergodic. This allowed us to conclude that an ESN trained on a sequence of scalar observations taken from the Lorenz system using Tikhonov least squares should $L^2(\mu)$ approximate the dynamics in the attractor.
In section \ref{Numerical_section} we simulated the Lorenz system ourselves and designated the $\xi$ and $\zeta$ components observations and targets respectively. We confirmed that as the number of data points $(\xi_k,\zeta_k)$ grew, the approximation of the target function improved. A good approximation was reached before the number of data points was large enough for the central limit theorem to (perhaps) become relevant. This suggests that (perhaps unfortunately) this asymptotic result may have limited practical use.

We discussed in section~\ref{Training_Thm_for_ESNs} that the $L^2(\mu)$ norm is weaker than the $C^1$ norm, in the sense that convergence in $C^1$ implies convergence in $L^2(\mu)$, while the converse does not hold. This is somewhat unsatisfying, because (topologically conjugate) time series forecasting requires the autonomous phase of the ESN to be a $C^1$ approximator of the embedded (structurally stable) dynamics.

It may be a fruitful to develop a training method beyond Tikhonov least squares that guarantees a $C^1$ approximation. Alternatively, it may be intriguing to explore under what conditions Tikhonov least squares does provide a sufficiently good $C^1$ approximation, which appears to happen frequently in simulations. Authors including \cite{Pathak2017}, \cite{Vlac_2019}, and \cite{Embedding_and_approximation_theorems} have demonstrated that an ESNs trained with Tikhonov least squares can replicate topological invariants of dynamical systems like Lyapunov exponents, fixed point eigenvalues, and homology groups, suggesting a sufficiently good $C^1$ approximation was achieved.

Though the $L^2(\mu)$ approximation may not be sufficient for topological results, it may be powerful enough to prove interesting results about ESNs applied to control problems. We can view a control system as a dynamical system, for which we have at every state $x \in M$ a set of actions $a \in \mathcal{A}$ available to us. Then we seek a map $\pi : M \to \mathcal{A}$, called an optimal controller (in control theory), or an optimal policy (in reinforcement learning), which maximises some reward function. To determine the value of a policy $\pi$ it suffices to determine the value function $u : M \to \mathbb{R}$ which, we can in principal approximate with an ESN from only partial observations of the control system. Developing algorithms to find the optimal controller/policy may be a rewarding direction of future work.

We also believe much of the theory presented here could be generalised or modified for other recurrent neural networks such as long short term memory networks (LSTMs). LSTMs are used extensively in industry and perform very well at \emph{context dependant} time series problems. These are problems where events that happened a long time in the past may suddenly become important in the present. The ESN is not well suited to such problems, because the importance of events necessarily decays (at least) exponentially quickly as we move further into the past, while the structure of an LSTM sidesteps this problem. A detailed explanation of the architecture is provided by \cite{iet:/content/conferences/10.1049/cp_19991218}. Equations for a peephole LSTMS are listed below
\begin{align}
    f_k &= \varphi_g(A_f c_k + W^{\text{in}}_f u_k) \nn \\
    i_k &= \varphi_g(A_i c_k + W^{\text{in}}_i u_k) \nn \\
    o_k &= \varphi_g(A_o c_k + W^{\text{in}}_o u_k) \nn \\
    c_k &= f_k \odot c_{k-1} + i_k \odot  \varphi_c(W^{\text{in}}_c u_k) \nn \\
    h_k &= \varphi_h(o_k \odot c_k) \nn
\end{align}
where $f_k, i_k, o_k, c_k, h_k \in \mathbb{R}^n$ are the vectors of the forget gate, input gate, output gate, cell state, and hidden state (also known as the output state) associated to the LSTM at time $k$. Next, $u_k \in \mathbb{R}$ is the scalar input of the LSTM at time $k$ and $\varphi_g : \mathbb{R}^n \to \mathbb{R}^n$ is a componentwise sigmoid function, $\varphi_c : \mathbb{R}^n \to \mathbb{R}^n$ is the componentwise $\tanh$ function, and $\varphi_h : \mathbb{R}^n \to \mathbb{R}^n$ is some function that is usually the identity map. $A_f, A_i, A_o, A_c$ are $n \times n$ matrices and $W^{\text{in}}_f,W^{\text{in}}_i,W^{\text{in}}_o,W^{\text{in}}_c$ are $1 \times n$ matrices. Finally, the symbol $\odot$ here represents the Hadamard product (taking componentwise product of 2 vectors).

We can see that LSTMs admit ESNs as a special case by fixing $A_f = 0$, $W^{\text{in}}_f = 0$, $b_f = 0$, $b_i = 0$, $b_c = \text{arc}\tanh(1/2)$, $W^{\text{in}}_c = 0$. It may therefore interest the academic community studying LSTMs, as well as those with industrial applications in mind, to generalise the theory of ESNs presented here and elsewhere to LSTMs. 

One shortcoming of Echo State Networks (that is typical for a machine learning paradigm) is that physical information
 about the underlying dynamical system is typically ignored. The question of how one might integrate some basic knowledge of the underlying dynamical system into the ESN architecture was recently explored numerically by \cite{arXiv:2001.04027} and \cite{arXiv:2001.02982}. Developing their ideas further may be an intriguing direction of future work.
 
\section*{Acknowledgements}

We thank the examiners of A.G. Hart's PhD confirmation viva, Alastair Spence and Chris Guiver, who offered helpful criticism of much of the material which formed the basis of this paper. We are grateful to an anonymous reviewer for their comments and careful reading of the manuscript which have significantly helped to improve its presentation.
A.G. Hart is supported by a scholarship from the EPSRC Centre for Doctoral Training in Statistical Applied Mathematics at Bath (SAMBa), under the project EP/L015684/1.

\bibliography{mybibfile}

\end{document}